\documentclass[12pt]{article}

\usepackage{amsmath, amsthm, amssymb, color}
\usepackage{graphicx}
\usepackage[all]{xypic}
\usepackage{makecell}
\usepackage{mathtools}

\usepackage{tikz}
\tolerance 10000
\headheight 0in
\headsep 0in
\evensidemargin 0in
\oddsidemargin \evensidemargin
\textwidth 6.5in
\topmargin .25in
\textheight 8.7in

\newtheorem{theorem}{Theorem}
\numberwithin{theorem}{section}
\newtheorem{proposition}[theorem]{Proposition}
\newtheorem{lemma}[theorem]{Lemma}
\newtheorem{corollary}[theorem]{Corollary}

\newtheorem{remark}[theorem]{Remark}
\newtheorem{example}[theorem]{Example}

\newcommand{\RR}{\mathbb{R}}

\newcommand{\PP}{\mathbb{P}}

\newcommand{\Mcal}{\mathcal{M}}
\newcommand{\Pcal}{\mathcal{P}}

\newcommand{\Lcal}{\mathcal{L}}
\newcommand{\supp}{\operatorname{supp}}
\newcommand{\RBM}{\operatorname{RBM}}

\usepackage{blkarray}

 \date{}

\graphicspath{{./Figures/}}

\title{\textbf{Mixtures and products in two graphical~models}}

\author{Anna Seigal and Guido Mont\'ufar}

\begin{document}

\maketitle

\begin{abstract} \noindent
We compare two statistical models of three binary random variables. 
One is a mixture model and the other is a product of mixtures model called a restricted Boltzmann machine. 
Although the two models we study look different from their parametrizations, we show that they represent the same set of distributions on the interior of the probability simplex, and are equal up to closure. 
We give a semi-algebraic description of the model in terms of six binomial inequalities and obtain closed form expressions for the maximum likelihood estimates. 
We briefly discuss extensions to larger models. 
\end{abstract}

\section{Introduction}

Graphical models are a popular tool for representing multivariate probability distributions in terms of conditional independence relations (see e.g. \cite{lauritzen1996graphical,Bishop:2006:PRM:1162264}). 
Any probability distribution can be modeled by a graphical model.
However, certain graphs involve many more parameters than others to represent specific distributions. 
In the interest of concisely representing data and reducing computational costs, one would like to understand which structures best represent data. 
For example, deep architectures, with several layers of hidden variables, have become increasingly important in machine learning (see~\cite{Bengio:2009:LDA:1658423.1658424} and references therein). 
Following \cite{montufar2015does} (and using their notation) we focus on two important building blocks to such multi-layer architectures: 
\begin{enumerate}
\item
One hidden variable with $k$ states, connected to $n$ observed binary variables. 
This is the {\em mixture of products} model $\mathcal{M}_{n,k}$. 
Up to scaling, it consists of $2 \times \cdots \times 2$ ($n$ times) tensors of non-negative rank at most $k$, 
$$ p = \sum_{i = 1}^k a_i \otimes b_i \otimes \cdots \otimes c_i, \qquad a_i, b_i, \ldots, c_i \in \mathbb{R}_{\geq 0}^2 .$$
\item 
A layer of $m$ hidden binary variables, each connected to $n$ observed binary variables. 
This is the {\em restricted Boltzmann machine} (RBM) model $\RBM_{n,m}$, also called the
{\em product of mixtures of products} model. 
Up to scale, it consists of $2 \times \cdots \times 2$ ($n$ times) tensors that are the Hadamard product of $m$ tensors of non-negative rank at most two, 
\begin{equation}
p = \prod_{i = 1}^m ( a_i \otimes b_i \otimes\cdots\otimes c_i + d_i \otimes e_i \otimes \cdots\otimes f_i ) , \qquad a_i, b_i,\ldots,  c_i, d_i, e_i,\ldots, f_i \in \mathbb{R}_{\geq 0}^2 .
\label{eq:rbmparametrization}
\end{equation}
\end{enumerate} 
Our main contribution is to find the set of distributions that these models can represent for the first open case $n = 3$. 
We find the semi-algebraic subset of the simplex that the models occupy. 
In doing so, we solve questions posed in~\cite{montufar2015does}. 

The implicit description of a statistical model gives a membership test for distributions, allows the computation of distances to the model (e.g., in terms of the Kullback-Leibler divergence), 
and suggests model-specific algorithms for parameter estimation~\cite{Su,Z}. 
In the above definitions, we consider the polynomial parametrization of the models. They are often defined as marginals of exponential families.\footnotemark
 The two definitions are equivalent up to closure, see for example \cite[Proposition 2.3]{montufar2015does}. 
In contrast to the exponential parametrization, we allow zeros in the decomposition, excluding the possibility that $p$ is identically zero. 

\footnotetext{As marginals of exponential families, RBMs and mixtures of products are given by $p(x) = \frac{1}{Z(W,b,c)}\sum_{y\in\{0,1\}^m} \exp(y^\top W x + c^\top y + b^\top x)$ and $p(x) = \frac{1}{Z(W,b,c)} \sum_{y\in \{e_j\colon j=1,\ldots, k\}  } \exp( y^\top W x + c^\top y + b^\top x)$, respectively, where $x\in\{0,1\}^n$ and $Z(W,b,c)$ is a normalization function. }

We note that $\mathcal{M}_{n,1}$ is the \emph{independence model}, 
described by the intersection of the Segre variety ${\rm Seg}(\PP^1 \times \cdots \times \PP^1)$ with the probability simplex $\Delta_{2^n -1}$ of joint probability distributions of $n$ binary random variables. 
Also, by definition, $\mathcal{M}_{n,2} = \RBM_{n,1}$. 
In \cite{ALLMAN201537} the description of $\mathcal{M}_{n,2}$ is found. The authors describe the `formidable obstacles' to extending their results to hidden variables with more than two states. 
\bigskip

Three binary variables take joint states in $\{ 0, 1\}^3$. 
The $2 \times 2 \times 2$ tensor $(p_{ijk})_{0 \leq i, j, k \leq 1}$ stores the probabilities of these elementary events. Such probability distributions lie in the simplex $\Delta_{2^3 - 1} = \Delta_7$. 
Strictly positive distributions lie on the interior of the simplex. 
We obtain the following description of $\RBM_{3,2}$. 

\begin{theorem} \label{prop}
The statistical model $\RBM_{3,2}$ is described on the interior of the simplex $\Delta_7$ by the union of six basic semi-algebraic sets. One is given by the two inequalities
\begin{equation}
\label{ineq3} \{ p_{000} p_{011} \geq p_{001} p_{010} , \quad p_{100} p_{111} \geq p_{101} p_{110} \} . \end{equation}
The other five
are obtained by permuting indices, and/or reversing the inequalities:
$$ \begin{matrix} 
 \{ p_{000} p_{011} \leq p_{001} p_{010} , \quad p_{100} p_{111} \leq p_{101} p_{110} \} \phantom{.}\\ 
  \{ p_{000} p_{101} \geq p_{001} p_{100} , \quad p_{010} p_{111} \geq p_{011} p_{110} \} \phantom{.}\\
   \{ p_{000} p_{101} \leq p_{001} p_{100} , \quad p_{010} p_{111} \leq p_{011} p_{110} \} \phantom{.}\\
    \{ p_{000} p_{110} \geq p_{100} p_{010} , \quad p_{001} p_{111} \geq p_{101} p_{011} \} \phantom{.} \\
    \{ p_{000} p_{110} \leq p_{100} p_{010} , \quad p_{001} p_{111} \leq p_{101} p_{011} \} . \end{matrix} $$
\end{theorem}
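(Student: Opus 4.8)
The first step is to rewrite the six sets in a uniform way. For a positive tensor $p$ and a coordinate direction $\ell\in\{1,2,3\}$, I will call $p$ \emph{$\ell$-aligned} if the two $2\times2$ slices of $p$ obtained by fixing the $\ell$-th index have determinants of a common sign (both $\ge0$ or both $\le0$), where $\det p_{0\cdot\cdot}:=p_{000}p_{011}-p_{001}p_{010}$, and so on. Inequality \eqref{ineq3} together with its sign-reversed partner is exactly the set of $1$-aligned $p$, and the other two pairs are the $2$- and $3$-aligned ones, so the union of the six sets is $\{p\in\operatorname{int}\Delta_7 : p \text{ is } \ell\text{-aligned for some }\ell\}$. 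It then suffices to prove the two inclusions ``$\subseteq$'' and ``$\supseteq$'' between $\RBM_{3,2}\cap\operatorname{int}\Delta_7$ and this set.

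For $(\subseteq)$ I would argue by sign bookkeeping. Write $p=q\odot r$ with $q,r$ of nonnegative rank at most two (where $\odot$ is the Hadamard product), say $q=u_1+u_2$, $r=v_1+v_2$ with the $u_i,v_i$ rank-one and nonnegative; the degenerate cases where $q$ or $r$ has rank $\le1$ are immediate. For each direction $\ell$, the two slices of $q$ perpendicular to $\ell$ are nonnegative combinations of the \emph{same} two rank-one nonnegative $2\times2$ matrices, and on such a cone the determinant has constant sign; so these two slices have determinants of a common sign $\varepsilon_\ell(q)\in\{+,0,-\}$, and likewise $\varepsilon_\ell(r)$. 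A direct computation gives $\varepsilon_1(q)\varepsilon_2(q)\varepsilon_3(q)\ge0$ (it equals a perfect square times a sign), and the same for $r$, hence $\prod_\ell\varepsilon_\ell(q)\varepsilon_\ell(r)\ge0$, so some direction $\ell$ has $\varepsilon_\ell(q)\varepsilon_\ell(r)\ge0$. For that $\ell$ I would invoke the $2\times2$ identity
\[
\det(A\odot B)=(\det A)\,B_{00}B_{11}+A_{01}A_{10}\,(\det B),
\]
valid for all $2\times2$ matrices: since $q,r$ are entrywise nonnegative and their slice determinants perpendicular to $\ell$ carry the compatible signs $\varepsilon_\ell(q)$, $\varepsilon_\ell(r)$, both slices of $p$ perpendicular to $\ell$ acquire determinant of one common sign, i.e.\ $p$ is $\ell$-aligned.

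For $(\supseteq)$, let $p\in\operatorname{int}\Delta_7$ be $\ell$-aligned; permuting coordinates, assume $\ell=1$, so the slices $P_0:=p_{0\cdot\cdot}$, $P_1:=p_{1\cdot\cdot}$ have determinant $\ge0$. I must build tensors $q,r$ of nonnegative rank $\le2$ with $p=q\odot r$; describing $q,r$ by their slices perpendicular to direction $1$, this amounts to finding positive $2\times2$ matrices $Q_0,Q_1,R_0,R_1$ with $Q_i\odot R_i=P_i$ such that $\{Q_0,Q_1\}$ lies in a common two-dimensional cone generated by two rank-one nonnegative $2\times2$ matrices, and likewise $\{R_0,R_1\}$ (this ``common rank-one cone'' condition is precisely what makes the tensor with slices $Q_0,Q_1$ have nonnegative rank $\le2$). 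I would write $Q_i=P_i^{\odot 1/2}\odot T_i$ and $R_i=P_i^{\odot 1/2}\oslash T_i$ (entrywise square root, product and quotient), which forces $Q_i\odot R_i=P_i$ for \emph{any} positive ``twist'' matrices $T_0,T_1$, and then show that $T_0,T_1$ can be chosen so that both pairs are admissible; the boundary cases ($\det P_0=0$ or $\det P_1=0$, or the resulting $Q_0,Q_1$ proportional) get handled separately by picking the twist so the relevant slice becomes rank one. The admissibility check rests on a criterion for when two positive $2\times2$ matrices lie in a common rank-one nonnegative cone, phrased in terms of the signs of $\det Q_0$, $\det Q_1$ and of the polarization $(Q_0)_{00}(Q_1)_{11}+(Q_0)_{11}(Q_1)_{00}-(Q_0)_{01}(Q_1)_{10}-(Q_0)_{10}(Q_1)_{01}$, together with a discriminant inequality.

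\textbf{Where the difficulty lies.} Direction $(\subseteq)$ is routine once the sign invariants $\varepsilon_\ell$ and the Hadamard determinant identity are in place. The real work is $(\supseteq)$: the twists $T_0,T_1$ must be chosen to satisfy four inequalities simultaneously (two coming from $q$, two from $r$), and one has to show this system is solvable exactly on $\{\det P_0\ge0,\ \det P_1\ge0\}$ and nowhere else. An alternative attack on $(\supseteq)$, which I would keep in reserve, is first to prove the cleaner statement that such a $p$ has nonnegative rank at most three --- by splitting a single common rank-one $2\times2$ summand off $P_0$ and $P_1$ with rank-one residuals, so that $p=(\text{rank one})+(\text{nonnegative rank}\le2)$ --- and then to upgrade a decomposition of this particular shape to a Hadamard product of two nonnegative-rank-$\le2$ tensors.
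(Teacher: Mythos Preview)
Your forward inclusion $(\subseteq)$ is correct and is a genuinely different (and arguably more direct) argument than the paper's. The sign invariants $\varepsilon_\ell$ are well-defined because, for $q=a\otimes b\otimes c + d\otimes e\otimes f$ with nonnegative vectors, one computes $\det q_{0\cdot\cdot}=a_0d_0(b_0e_1-b_1e_0)(c_0f_1-c_1f_0)$ and $\det q_{1\cdot\cdot}=a_1d_1(b_0e_1-b_1e_0)(c_0f_1-c_1f_0)$, so indeed both slices share a sign and $\varepsilon_1\varepsilon_2\varepsilon_3$ is a square. Your Hadamard identity then finishes that direction cleanly.

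The reverse inclusion $(\supseteq)$, however, is only a sketch: you reduce to a system of four sign/discriminant constraints on the twist matrices $T_0,T_1$ and assert that it is solvable on $\{\det P_0\ge 0,\ \det P_1\ge 0\}$, but you do not actually exhibit a solution or prove solvability. Your fallback plan---first proving nonnegative rank $\le 3$ and then ``upgrading'' to a Hadamard factorization---is circular as stated, since the upgrade is precisely Theorem~\ref{conj:eq}, which in the paper is deduced \emph{from} the present theorem. So as written there is a genuine gap in $(\supseteq)$.

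The paper closes this gap with a single idea you are missing: pass to log-probabilities. On the interior of $\Delta_7$ the map $p\mapsto \log p$ turns Hadamard product into addition, so $\RBM_{3,2}$ becomes the Minkowski sum of the polyhedral pieces of $\log\Mcal_{3,2}$ (the cones $X,Y,\ldots$ cut out by~\eqref{ineq},~\eqref{ineq2}, etc.), and each target set~\eqref{ineq3} becomes a polyhedral cone $W$. In these linear coordinates, $(\subseteq)$ is immediate by adding inequalities, and $(\supseteq)$ is proved by exhibiting a spanning set of $W$ (its lineality space together with two extra rays) each of whose vectors already lies in $X$ or in $Y$; the remaining pairs of $\Mcal_{3,2}$-pieces are handled by index symmetry. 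This bypasses your twist-matrix system entirely. If you want to salvage your direct approach, the log-coordinate computation effectively hands you the twists: any decomposition $w=x+y$ with $x\in X$, $y\in Y$ exponentiates to $q=\exp(x)$, $r=\exp(y)$ with $q\odot r=p$, so the Minkowski-sum argument \emph{is} the constructive existence proof for your $T_0,T_1$.
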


These binomial inequalities correspond to 
determinants of slices of the tensor $(p_{ijk})$. They record conditional correlations in the distribution. 

We compare $\RBM_{3,2}$ to the mixture model $\Mcal_{3,3}$ of non-negative rank at most three tensors. 
Both models are over-parametrized in the seven-dimensional simplex $\Delta_7$, since they have 11 parameters. In \cite{montufar2015does}, it is shown that $\mathcal{M}_{3,3}$ does not fill the simplex. The authors state `we believe that $\mathcal{M}_{3,3}$ and $\RBM_{3,2}$ are very similar, if not equal.' We resolve this question as follows. 

\begin{theorem} \label{conj:eq}
We have the equality $\mathcal{M}_{3,3} = \overline{\RBM_{3,2}}$. Equality $\mathcal{M}_{3,3} = \RBM_{3,2}$ holds on the interior of the simplex. 
\end{theorem}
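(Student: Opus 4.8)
I plan to prove the interior equality $\Mcal_{3,3}\cap\Delta_7^{\circ}=\RBM_{3,2}\cap\Delta_7^{\circ}$, where $\Delta_7^{\circ}=\{p\in\Delta_7:p_{ijk}>0\}$, and then deduce the closure equality from two auxiliary facts. First, $\Mcal_{3,3}$ is closed: it is the continuous image under summation of the set of triples $(T_1,T_2,T_3)$ of non-negative rank-one $2\times 2\times 2$ tensors with $\|T_1\|_1+\|T_2\|_1+\|T_3\|_1=1$, and this set is compact (non-negative rank-one tensors of bounded $\ell^1$-norm form a compact set, and no cancellation can occur in a sum of non-negative tensors). Second, every point of $\Mcal_{3,3}$, and of $\RBM_{3,2}$, is a limit of strictly positive points of the same model: replace each vector $a_i,b_i,\dots$ in the relevant parametrisation by $a_i+\varepsilon(1,1),\ b_i+\varepsilon(1,1),\dots$ and renormalise. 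Together with the interior equality these give $\overline{\RBM_{3,2}}=\overline{\RBM_{3,2}\cap\Delta_7^{\circ}}=\overline{\Mcal_{3,3}\cap\Delta_7^{\circ}}=\Mcal_{3,3}$.

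For the interior equality I would, via Theorem~\ref{prop}, show that $\Mcal_{3,3}\cap\Delta_7^{\circ}$ equals the union $D$ of the six basic semi-algebraic sets there, intersected with $\Delta_7^{\circ}$. The inclusion $D\cap\Delta_7^{\circ}\subseteq\Mcal_{3,3}$ is easy. The six sets are permuted transitively by the group generated by permutations of the three tensor factors and by interchanging the two states of a single binary variable --- the latter negates the corresponding $2\times 2$ determinant, exchanging ``$\ge$'' with ``$\le$'' --- and $\Mcal_{3,3}$ is invariant under this group, so it suffices to treat \eqref{ineq3}: the locus where the slices $M_0=(p_{0jk})$ and $M_1=(p_{1jk})$, read as $2\times 2$ matrices in the second and third indices, both have non-negative determinant. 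For positive $p$ in this set write $M_j=g_jh_j^{\top}+s_j\,e_2e_2^{\top}$, where $g_jh_j^{\top}$ is the non-negative rank-one matrix agreeing with $M_j$ outside its $(1,1)$ entry and $s_j=\det M_j/(M_j)_{22}\ge 0$. Then
\[
p=e_1\otimes g_0\otimes h_0+e_2\otimes g_1\otimes h_1+(s_0e_1+s_1e_2)\otimes e_2\otimes e_2
\]
writes $p$ as a sum of three non-negative rank-one tensors, so $p\in\Mcal_{3,3}$.

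The reverse inclusion $\Mcal_{3,3}\cap\Delta_7^{\circ}\subseteq D$ is the crux. Let $p=\sum_{i=1}^{3}a_i\otimes b_i\otimes c_i$ be positive; discarding any zero summands (which only lowers the rank and makes the claim easier), we may assume each $a_i,b_i,c_i$ is a nonzero vector in $\mathbb{R}_{\ge 0}^2$. Expanding the determinant of a sum of three rank-one $2\times 2$ matrices gives, with $u\wedge v:=u_0v_1-u_1v_0$,
\[
\det M_0=\sum_{i<j}(a_i)_0(a_j)_0\,(b_i\wedge b_j)(c_i\wedge c_j),\qquad\det M_1=\sum_{i<j}(a_i)_1(a_j)_1\,(b_i\wedge b_j)(c_i\wedge c_j),
\]
and similarly for the slice determinants along the other two axes, with $a,b,c$ cyclically permuted. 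Since the rays $a_1,a_2,a_3$ lie in the closed first quadrant, the sign pattern of $(a_1\wedge a_2,\ a_1\wedge a_3,\ a_2\wedge a_3)$ depends only on which of the three is the ``middle'' ray (the one of intermediate angle), and likewise for the $b$- and $c$-triples. If two of the three families $\{a_i\},\{b_i\},\{c_i\}$ share a middle term, then along the axis labelled by the remaining family the products $(b_i\wedge b_j)(c_i\wedge c_j)$ (say) all carry one sign; as $\det M_0$ and $\det M_1$ are non-negative combinations of them, they cannot have strictly opposite signs, so one of the six inequality pairs of Theorem~\ref{prop} holds and $p\in D$.

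It remains to treat the configurations in which the three middle terms are pairwise distinct. Up to the symmetry group above these reduce to finitely many combinatorial types, and for each a sign argument alone does not decide matters. There I would write the six slice determinants as linear combinations --- with positive coefficients fixed by the slopes of the rays --- of three non-negative products of the coordinates of the $a_i,b_i,c_i$, and verify directly that if the inequality pair along one axis fails then one of those three products must dominate the other two, which in turn forces the inequality pair along a different axis to hold. This finite verification --- equivalently, the ``only if'' half of a semi-algebraic description of $\Mcal_{3,3}$ on the interior --- is the step I expect to be the main obstacle; the remainder of the argument is routine bookkeeping.
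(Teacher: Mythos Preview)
Your reduction of the closure statement to the interior statement is correct, and your explicit non-negative rank-three decomposition for $D\cap\Delta_7^\circ\subseteq\Mcal_{3,3}$ is slightly slicker than the paper's construction (modulo a small indexing slip: with your conventions the remainder is $s_j=\det M_j/(M_j)_{00}$, not $(M_j)_{22}$).

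The genuine gap is exactly where you flag it: the case of $\Mcal_{3,3}\cap\Delta_7^\circ\subseteq D$ in which the middle rays of $\{a_i\}$, $\{b_i\}$, $\{c_i\}$ carry pairwise distinct indices. Your proposed ``domination'' argument is not a proof: in that configuration each slice determinant is a genuinely signed combination of three terms, and the claim that failure of one inequality pair forces a single term to dominate in a way that propagates to another axis requires a quantitative argument you have not supplied. Since this case is the entire content of the inclusion (the shared-middle case being immediate, as you observe), the proposal as it stands does not establish the theorem.

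The paper avoids this difficulty by a different, asymmetric decomposition. Rather than treating the three rank-one summands on equal footing, it writes a generic $p\in\Mcal_{3,3}$ as $p'+q$ with $p'\in\Mcal_{3,2}$ and $q$ non-negative of rank one, and invokes the known semi-algebraic description of $\Mcal_{3,2}$ from~\cite{ALLMAN201537}: after relabelling, all six slice determinants $d_{i,j}(p')$ are non-negative. The question then becomes whether adding a single rank-one tensor $q=a\otimes b\otimes c$ can simultaneously flip the three determinants $d_{1,h},d_{2,h},d_{3,h}$ (the sign pattern characterising the complement of $D$). Expanding each flipped determinant gives a bilinear inequality in two of $a,b,c$, and a short case analysis on which factor is negative in each produces a contradiction with the non-negativity of some $d_{i,j}(p')$. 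This trades your symmetric three-term wedge expansions for a two-term perturbation problem anchored to the known $\Mcal_{3,2}$ description, and that is what makes the case analysis close. Your outline would become a complete proof if you replaced the symmetric treatment of the three summands by this ``$\Mcal_{3,2}$ plus rank one'' trick.
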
 
\begin{figure}[h]
\centering
\includegraphics[width=7cm]{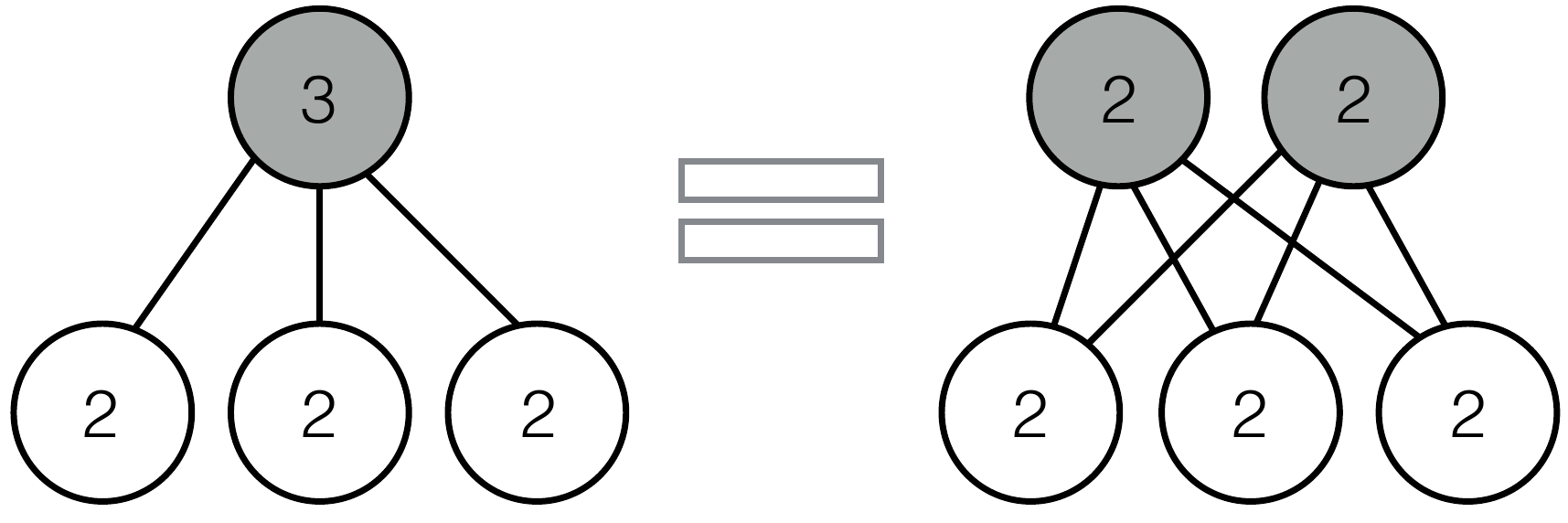}
\caption{A pictorial representation of Theorem \ref{conj:eq}. The label of a variable is the number of states it has; the shaded nodes are hidden.}
\end{figure}

The notation $\overline{\RBM_{3,2}}$ refers to the topological closure of $\RBM_{3,2}$. The mixture model $\Mcal_{3,3}$ and the RBM model $\RBM_{3,2}$ look quite different in their parametrization, but this result shows that they turn out to parametrize the same probability distributions (up to closure). The parametrization of $\RBM_{3,2}$ in~\eqref{eq:rbmparametrization} does not describe a closed set on the boundary of the simplex. 
We describe $\RBM_{3,2}$ on the boundary of the simplex in Proposition \ref{prop:bd}. 
On the other hand, $\mathcal{M}_{3,3}$ is closed (see Proposition~\ref{BW}) and we have the following corollary. 

\begin{corollary} \label{cor:m33}
The model $\mathcal{M}_{3,3}$ is described 
on $\Delta_7$ by the inequalities in Theorem \ref{prop}. 
\end{corollary}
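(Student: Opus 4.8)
The plan is to deduce this from Theorems \ref{prop} and \ref{conj:eq} together with the boundary description of $\RBM_{3,2}$ in Proposition \ref{prop:bd} and the closedness of $\mathcal{M}_{3,3}$ from Proposition \ref{BW}. Write $\mathcal{S}\subseteq\Delta_7$ for the union of the six basic semi-algebraic sets in Theorem \ref{prop}, now regarded inside the whole simplex; as each is cut out by non-strict polynomial inequalities, $\mathcal{S}$ is closed. The goal is $\mathcal{M}_{3,3}=\mathcal{S}$.

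First I would prove $\mathcal{M}_{3,3}\subseteq\mathcal{S}$. The key point is that every point of $\RBM_{3,2}$ is a limit of strictly positive points of $\RBM_{3,2}$: given a factorization as in \eqref{eq:rbmparametrization}, replace each vector $a_i,b_i,\dots$ by $a_i+\varepsilon\mathbf 1,\,b_i+\varepsilon\mathbf 1,\dots$ and renormalize; as $\varepsilon\to 0$ these strictly positive distributions converge to the original. Hence, using Theorem \ref{conj:eq} and then Theorem \ref{prop},
$$ \mathcal{M}_{3,3}=\overline{\RBM_{3,2}}=\overline{\RBM_{3,2}\cap\operatorname{int}\Delta_7}=\overline{\mathcal{S}\cap\operatorname{int}\Delta_7}\subseteq\overline{\mathcal{S}}=\mathcal{S}. $$

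For the reverse inclusion $\mathcal{S}\subseteq\mathcal{M}_{3,3}$, the part inside $\operatorname{int}\Delta_7$ is immediate from Theorems \ref{prop} and \ref{conj:eq} (there $\mathcal{S}$, $\RBM_{3,2}$ and $\mathcal{M}_{3,3}$ all coincide), so it remains to show $\mathcal{S}\cap\partial\Delta_7\subseteq\mathcal{M}_{3,3}$. A point $p\in\mathcal{S}\cap\partial\Delta_7$ satisfies one of the six binomial systems and lies in a proper face of $\Delta_7$ cut out by its zero coordinates; on that face the binomial system either becomes vacuous or forces further coordinates to vanish, leaving only finitely many low-dimensional configurations. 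For each, I would check that $p\in\mathcal{M}_{3,3}$, either because $p$ already lies in $\RBM_{3,2}$ by Proposition \ref{prop:bd}, or by exhibiting $p$ as a limit of points of $\mathcal{S}\cap\operatorname{int}\Delta_7\subseteq\mathcal{M}_{3,3}$ obtained by perturbing the vanishing coordinates while staying within the same semi-algebraic piece; since $\mathcal{M}_{3,3}$ is closed by Proposition \ref{BW}, this yields $p\in\mathcal{M}_{3,3}$.

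The main obstacle is this last, boundary, step: one must verify that each vanishing coordinate can indeed be perturbed back into $\operatorname{int}\Delta_7$ without leaving the chosen piece of $\mathcal{S}$ — equivalently, that $\mathcal{S}$ has no component contained entirely in $\partial\Delta_7$ that is invisible from the interior. This is precisely what the explicit stratification of the boundary in Proposition \ref{prop:bd} provides; the remainder of the argument is soft point-set topology.
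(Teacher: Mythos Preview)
Your overall strategy matches the paper's: the paper does not give a separate proof of the corollary but simply notes that $\mathcal{M}_{3,3}$ is closed (Proposition~\ref{BW}) and invokes Theorems~\ref{prop} and~\ref{conj:eq}. Your chain
\[
\mathcal{M}_{3,3}=\overline{\RBM_{3,2}}=\overline{\RBM_{3,2}\cap\operatorname{int}\Delta_7}=\overline{\mathcal{S}\cap\operatorname{int}\Delta_7}\subseteq\mathcal{S}
\]
is correct and is essentially the intended argument for one inclusion; you are in fact more careful than the paper in isolating the remaining boundary step $\mathcal{S}\cap\partial\Delta_7\subseteq\mathcal{M}_{3,3}$.

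There is, however, a genuine slip in your last paragraph. Proposition~\ref{prop:bd} describes $\RBM_{3,2}\cap\partial\Delta_7$ via condition~\eqref{star}, and this set is \emph{strictly smaller} than $\mathcal{S}\cap\partial\Delta_7$: Example~\ref{counter} is a point of $\mathcal{S}\cap\partial\Delta_7$ (indeed of $\mathcal{M}_{3,3}$) that fails~\eqref{star}. So Proposition~\ref{prop:bd} cannot ``provide'' the perturbation for such points---it says nothing about them. What you actually need is the purely semi-algebraic fact $\mathcal{S}=\overline{\mathcal{S}\cap\operatorname{int}\Delta_7}$, and that has to be checked directly from the binomial inequalities, not read off from Proposition~\ref{prop:bd}.

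This check is short but is its own argument: given $p\in\mathcal{S}\cap\partial\Delta_7$, pick a piece of $\mathcal{S}$ containing $p$, say $\{d_{1,0}\le 0,\ d_{1,1}\le 0\}$, and add a small $\epsilon>0$ to each zero coordinate. Each $d_{i,j}$ changes by an amount that is $O(\epsilon)$ with a controllable sign, and one verifies (a brief case split on which entries vanish) that the chosen pair of inequalities---or, failing that, one of the other two pairs---remains satisfied for small $\epsilon$. Once you have $\mathcal{S}=\overline{\mathcal{S}\cap\operatorname{int}\Delta_7}$, closedness of $\mathcal{M}_{3,3}$ finishes the proof exactly as you wrote.
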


Previous results showed that $\Mcal_{3,3}$ has relative volume at most $96.4\%$, and $\RBM_{3,2}$ has relative volume at most $99.2\%$ inside the simplex $\Delta_7$~\cite{montufar2015does}. Simulations using Theorem~\ref{prop} and Corollary \ref{cor:m33} estimate the true volume of both of these models to be $75.3\%$. 

We use Theorem \ref{prop} to prove a conjecture from \cite[Section 3.5.1]{montufar2015does}: 

\begin{corollary} \label{conj:MM}
No distribution in $\RBM_{3,2}$ has four modes. 
 \end{corollary}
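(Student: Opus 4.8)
To prove Corollary~\ref{conj:MM}, recall that a \emph{mode} of a probability distribution $p$ on $\{0,1\}^3$ is a state $x$ with $p(x)>p(y)$ for every state $y$ differing from $x$ in a single coordinate. I would argue by contradiction, combining the strict inequalities that encode ``$x$ is a mode'' with the semi-algebraic description of $\RBM_{3,2}$ in Theorem~\ref{prop}.

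First, two elementary reductions. A mode satisfies $p(x)>p(y)\ge 0$, so it carries positive mass; and two modes cannot be adjacent on the cube, since $p(x)>p(y)$ and $p(y)>p(x)$ cannot both hold. Hence a set of four modes is an independent set of the $3$-cube graph, i.e.\ four states at pairwise Hamming distance at least two, and a short case analysis shows the only such sets are the two parity classes $E=\{000,011,101,110\}$ and $O=\{001,010,100,111\}$. The model $\RBM_{3,2}$ and the list of six basic semi-algebraic sets in Theorem~\ref{prop} are both invariant under flipping all three coordinates $0\leftrightarrow 1$, which swaps $E$ and $O$; so it suffices to treat the case where the set of modes is $E$.

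Suppose then that the modes are the four vertices of $E$, and fix one of the three coordinate directions. The tensor $(p_{ijk})$ has two $2\times 2$ slices perpendicular to that direction, and each slice contains exactly two vertices of $E$. Since $E$ is the set of even-weight vertices, these two vertices sit on the main diagonal of one slice and on the anti-diagonal of the other. In the first slice the two diagonal entries strictly exceed the two off-diagonal entries (they are modes), so that slice has strictly positive determinant; in the second slice the two off-diagonal entries strictly exceed the diagonal entries, so its determinant is strictly negative. Thus, in every coordinate direction, the two parallel $2\times 2$ slices of $(p_{ijk})$ have determinants of strictly opposite signs. But each of the six sets in Theorem~\ref{prop} is exactly the requirement that, for some fixed coordinate direction, both parallel slices have determinant $\ge 0$, or both have determinant $\le 0$ (its two defining inequalities are precisely these two slice determinants). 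Hence a distribution with mode set $E$ lies in none of the six sets.

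Finally, since $\RBM_{3,2}\subseteq\overline{\RBM_{3,2}}=\Mcal_{3,3}$ by Theorem~\ref{conj:eq}, and $\Mcal_{3,3}$ equals the union of the six basic semi-algebraic sets on all of $\Delta_7$ by Corollary~\ref{cor:m33}, no distribution in $\RBM_{3,2}$ has four modes. There is no serious obstacle here: the content is entirely in Theorem~\ref{prop}, and the only steps needing care are the case analysis that $E$ and $O$ are the unique size-four independent sets of the $3$-cube, and checking that the six defining inequalities of Theorem~\ref{prop} are the slice determinants described above; both are short and mechanical. One could also bypass the closure step and cite only the interior description in Theorem~\ref{prop}, noting that the slice-determinant inequalities above are strict, which already excludes a four-mode distribution from the boundary of $\Delta_7$ as well.
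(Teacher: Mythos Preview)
Your argument is correct and is essentially the paper's own proof. Both reduce by symmetry to the mode set $E=\{000,011,101,110\}$ and then observe that the mode inequalities force, in every coordinate direction, the two parallel $2\times 2$ slice determinants to have strictly opposite signs; the paper phrases this in the equivalent language of triangulations (the log-probability inequalities that determine how the induced triangulation restricts to the faces of the cube, landing in Figure~\ref{2}), but the content is identical.

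The only real difference is on $\partial\Delta_7$: the paper invokes Proposition~\ref{prop:bd} (two adjacent entries vanish, so four modes are impossible), whereas you route through Corollary~\ref{cor:m33} and the containment $\RBM_{3,2}\subseteq\Mcal_{3,3}$. Both are valid. Your closing parenthetical alternative, however, is not: a four-mode distribution can lie on $\partial\Delta_7$ (e.g.\ $p=\tfrac14(\delta_{000}+\delta_{011}+\delta_{101}+\delta_{110})$), and Theorem~\ref{prop} alone says nothing about $\RBM_{3,2}\cap\partial\Delta_7$, so strictness of the slice determinants does not by itself finish that case without appealing to Corollary~\ref{cor:m33} or Proposition~\ref{prop:bd}.
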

For a discrete distribution, a \emph{mode} is a state with larger probability than any of its Hamming neighbour states. 
Corollary \ref{conj:MM} is stated as a conjecture $\RBM_{3,2} \cap \mathcal{G}_3 = \emptyset$ in \cite{montufar2015does}, where $\mathcal{G}_3$ denotes distributions on $\{ 0, 1\}^3$ with four modes (the maximum possible number). 
Note that the models $\mathcal{M}_{3,4}$ and $\RBM_{3,3}$ fill the interior of the simplex $\Delta_7$ \cite{montufar2013mixture, montufar2016hierarchical}. 
Corollary~\ref{conj:MM} also follows from Theorem~\ref{conj:eq}, since no $p \in \Mcal_{3,3}$ has four modes~\cite[Proposition~3.10]{montufar2015does}. 

\medskip

This remainder of the paper is organized as follows.
We derive the implicit description of $\RBM_{3,2}$ in Section \ref{rbm32}. 
We obtain the equality of $\RBM_{3,2}$ and $\mathcal{M}_{3,3}$ in Section~\ref{m33}. 
We connect this description to triangulations of the three-cube in Section \ref{triang}, where we also prove Corollary \ref{conj:MM}. 
We describe the boundary of the model $\mathcal{M} 
= \Mcal_{3,3} = \overline{\RBM_{3,2}}$ in Section \ref{5}, and we study the maximum likelihood problem for the model in Section \ref{6}. 
We explain how to construct a three-dimensional visualization of the model in Section \ref{7}. 
Finally, in Section~\ref{section:conclusions} we study extensions to $n$ binary random variables. 

\section{The semi-algebraic description of $\boldsymbol{\RBM_{3,2}}$} \label{rbm32}

We first recall the semi-algebraic description of the non-negative rank at most two model $\mathcal{M}_{3,2}$ given in \cite{ALLMAN201537}.
The model is described in $\Delta_7$ by the union of four basic semi-algebraic sets. 
On the interior of the simplex, one of the sets is given by the inequalities
\begin{equation} \label{ineq} \begin{matrix} p_{000} p_{011} \geq p_{010} p_{001} , & p_{000} p_{101} \geq p_{100} p_{001}, & p_{000} p_{110} \geq p_{100} p_{010} , \\
p_{100} p_{111} \geq p_{110} p_{101} , & p_{010} p_{111} \geq p_{110} p_{011} , &  p_{001} p_{111} \geq p_{101} p_{011}.
\end{matrix} \end{equation}
The other three sets are obtained 
by reversing the signs of the inequalities in {\em two out of the three} columns of \eqref{ineq}. For example: 
\begin{equation} \label{ineq2} \begin{matrix} p_{000} p_{011} \geq p_{010} p_{001} , & p_{000} p_{101} \leq p_{100} p_{001}, & p_{000} p_{110} \leq p_{100} p_{010} , \\
p_{100} p_{111} \geq p_{110} p_{101} , & p_{010} p_{111} \leq p_{110} p_{011} , & p_{001} p_{111} \leq p_{101} p_{011} .
\end{matrix} \end{equation}

One way to get a distribution in $\RBM_{3,2}$ is to take the Hadamard product of a distribution satisfying \eqref{ineq} with one satisfying \eqref{ineq2}. 
We find the semi-algebraic description for all distributions expressible as such a Hadamard product. It is defined by the polynomial inequalities in \eqref{ineq3}. From this, swapping indices gives the full semi-algebraic description of the restricted Boltzmann machine $\RBM_{3,2}$ on the interior of the simplex. Note that the independence model $\mathcal{M}_{3,1}$ is obtained on the interior of $\Delta_7$ by setting the inequalities in~\eqref{ineq} or~\eqref{ineq2} to equalities. 

\subsection{On the interior of the simplex} 

The binomial inequalities above translate to linear inequalities in the log-probabilities. For a strictly positive distribution $p = (p_{ijk})$, we take the log distribution $l_{ijk} = {\rm log}(p_{ijk})$. 
Taking the logarithm of the inequalities in \eqref{ineq} gives the polyhedron
\begin{equation*} X = \left\{ \begin{matrix}  l_{000} + l_{011} - l_{001} - l_{010} \geq 0, &  l_{100} + l_{111} - l_{101} - l_{110} \geq 0 \\ 
l_{000} + l_{101} - l_{001} - l_{100} \geq 0, &  l_{010} + l_{111} - l_{011} - l_{110} \geq 0 \\ 
l_{000} + l_{110} - l_{010} - l_{100} \geq 0, &  l_{001} + l_{111} - l_{011} - l_{101} \geq 0 \\ \end{matrix} \right\} .
\end{equation*}
Similarly, we define $Y$ to be the log-probabilities satisfying the logarithms of \eqref{ineq2}, 
\begin{equation*} Y = \left\{ \begin{matrix}  l_{000} + l_{011} - l_{001} - l_{010} \geq 0, &  l_{100} + l_{111} - l_{101} - l_{110} \geq 0 \\ 
l_{000} + l_{101} - l_{001} - l_{100} \leq 0, &  l_{010} + l_{111} - l_{011} - l_{110} \leq 0 \\ 
l_{000} + l_{110} - l_{010} - l_{100} \leq 0, &  l_{001} + l_{111} - l_{011} - l_{101} \leq 0 \\ \end{matrix} \right\} .
\end{equation*}
Taking the Hadamard product in probability space is the same as taking the
sum in log-probability space. Therefore, showing Theorem \ref{prop} is equivalent to proving that the Minkowski sum $X + Y = \{ x + y : x \in X, y \in Y \}$ is
\begin{equation*} W = \{ l_{000} + l_{011} - l_{001} - l_{010} \geq 0 ,  \quad l_{100} + l_{111} - l_{101} - l_{110}  \geq 0 \} .\end{equation*}

The two polyhedra $X$ and $Y$ are eight-dimensional in $\RR^8$.
The lineality spaces of a polyhedron is the space obtained by setting all the inequalities in their descriptions to equalities. For both $X$ and $Y$, the lineality space is the set of tensors $(l_{ijk})$ for which the tensor $({\rm exp}(l_{ijk}))$ is rank one. It is spanned by the rows of the matrix
\[
\begin{blockarray}{cccccccc}
l_{000} & l_{100} & l_{010} & l_{001} & l_{110} & l_{101} & l_{011} & l_{111}  \\
\begin{block}{(cccccccc)}
  1 & 1 & 1 & 1 & 1 & 1 & 1 & 1 \\
    0 & 1 & 0 & 0 & 1 & 1 & 0 & 1 \\
    0 & 0 & 1 & 0 & 1 & 0 & 1 & 1 \\
    0 & 0 & 0 & 1 & 0 & 1 & 1 & 1 \\
\end{block}
\end{blockarray} \text{ }.
 \]
The polyhedron $W$ is also eight-dimensional. It has a six-dimensional lineality space that is spanned degenerately by the rows of the matrix
\begin{equation} \label{spanning}
\begin{blockarray}{cccccccc}
l_{000} & l_{100} & l_{010} & l_{001} & l_{110} & l_{101} & l_{011} & l_{111}  \\
\begin{block}{(cccccccc)}
  1 & 0 & 1 & 0 & 0 & 0 & 0 & 0 \\
  1 & 0 & 0 & 1 & 0 & 0 & 0 & 0 \\
    0 & 0 & 0 & 0 & 1 & 0 & 0 & 1 \\
  0 & 0 & 0 & 0 & 0 & 1 & 0 & 1 \\
  0 & 0 & 1 & 0 & 0 & 0 & 1 & 0 \\
  0 & 0 & 0 & 1 & 0 & 0 & 1 & 0 \\
  0 & 1 & 0 & 0 & 1 & 0 & 0 & 0 \\
  0 & 1 & 0 & 0 & 0 & 1 & 0 & 0 \\
\end{block}
\end{blockarray} \text{ }.
\end{equation}
Using the software \verb|polymake| \cite{GJ}, we can find a description for the quotient of $X$ or $Y$ by its lineality space. They are both triangular bipyramids. 

\begin{proof}[Proof of Theorem \ref{prop}]

We aim to show that $W = X + Y$. We begin with the containment $X + Y \subseteq W$. Summing the first equations in $X$ and $Y$ yields 
$$  x_{000} + y_{000} + x_{011} + y_{011} - x_{001} - y_{001} - x_{010} - y_{010} \geq 0 $$
while summing the second equations from $X$ and $Y$ gives
$$ x_{100} + y_{100} + x_{111} + y_{111} - x_{101} - y_{101} - x_{110} - y_{110} \geq 0 .$$
Translating back to the $l$-coordinates, we get 
$ l_{000} + l_{011} - l_{001} - l_{010} \geq 0$ and $l_{100} + l_{111} - l_{101} - l_{110}  \geq 0 $. 
Hence $X + Y \subseteq W$. 

For the reverse containment $W \subseteq X + Y$ we require a spanning set for $W$ in which every basis vector lies either in $X$ or in $Y$. The first four rows of the lineality space of $W$ in \eqref{spanning} lie in $X$, while the last four rows lie in $Y$. Hence any non-negative combination of the lineality space lies in $W$. To extend to negative linear combinations we multiply the spanning set by $-1$. 
The first four rows of the negation of \eqref{spanning} lie in $Y$, and the last four are in $X$. 

It remains to find a basis for the two-dimensional polytope obtained by taking the quotient of $W$ by its lineality space. The quotient is spanned by non-negative combinations of any two linearly independent vectors in $W$ not in its lineality space. For example $l_{000} \in X$ and $l_{100} \in Y$. All non-negative combinations of these lie in $X + Y$. This concludes the proof. \end{proof}

\subsection{On the boundary of the simplex} 
\label{hi}

Theorem \ref{prop} gives a semi-algebraic description for the restricted Boltzmann machine $\RBM_{3,2}$ on the interior of the simplex $\Delta_7$. 
However, for $p$ in the boundary of the simplex $\partial \Delta_7$, the inequalities in Theorem~\ref{prop} are not sufficient for membership in $\RBM_{3,2}$. 

\begin{proposition} \label{prop:bd}
The intersection $\RBM_{3,2} \cap \partial \Delta_7$ is given by distributions which satisfy 
  \begin{equation}\label{star}
  \begin{matrix}
\text{If the probability of a state vanishes, so does the } \\
\text{ probability of one of its Hamming neighbour states.} 
\end{matrix}
  \end{equation}
\end{proposition}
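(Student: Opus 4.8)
The plan is to prove the two inclusions separately. Throughout, write an element of $\RBM_{3,2}$ as a Hadamard (entrywise) product $p = P\odot Q$, where $P = a\otimes b\otimes c + d\otimes e\otimes f$ and $Q = a'\otimes b'\otimes c' + d'\otimes e'\otimes f'$ have nonnegative rank at most two, all eight vectors lie in $\RR_{\ge 0}^2$, and $p\not\equiv 0$; I abbreviate the Hamming-neighbour condition by \eqref{star}.

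\emph{Necessity.} Suppose $p\in\RBM_{3,2}$ and $p_{\mathbf x} = 0$ for some state $\mathbf x = (i,j,k)$. Since $p_{\mathbf x} = P_{\mathbf x}Q_{\mathbf x}$ with $P_{\mathbf x},Q_{\mathbf x}\ge 0$, one factor vanishes at $\mathbf x$, say $P_{\mathbf x} = 0$; then $a_ib_jc_k = 0$ and $d_ie_jf_k = 0$, so some coordinate among $a_i,b_j,c_k$ and some coordinate among $d_i,e_j,f_k$ equals zero. I would finish with the short case analysis on where these two zero coordinates sit. If they sit in the same slot, say $a_i = d_i = 0$, then $P$ vanishes on the entire slice $\{x_1 = i\}$, in particular at the Hamming neighbour $(i,\bar{\jmath},k)$. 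If they sit in different slots, say $a_i = 0$ and $e_j = 0$, then $P_{i,j,\bar{k}} = a_ib_jc_{\bar k} + d_ie_jf_{\bar k} = 0$, again a Hamming neighbour; the remaining cross-slot cases are symmetric. Hence $p$ vanishes at a Hamming neighbour of $\mathbf x$, so \eqref{star} holds, giving that $\RBM_{3,2}\cap\partial\Delta_7$ is contained in the set of distributions satisfying \eqref{star}.

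\emph{Sufficiency.} Conversely, given $p\in\partial\Delta_7$ satisfying \eqref{star}, the plan is to construct an explicit factorization $p = P\odot Q$ with $P,Q$ of nonnegative rank at most two. Condition \eqref{star} is exactly the statement that the zero set $Z = \{\mathbf x : p_{\mathbf x} = 0\}$ induces a subgraph of the $3$-cube with no isolated vertex, so the first step is to classify the nonempty such $Z$ up to the order-$48$ symmetry group of the cube. This leaves only a short list, organised by the size of $Z$ --- at one extreme an edge or a two-edge path, at the other extreme the complement of a single vertex. When $p$ itself already has nonnegative rank at most two --- which holds whenever $\supp(p)$ has at most three states, or is a $2$-face, or is a union of two disjoint edges --- one takes $Q$ to be the all-ones tensor and is done. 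The genuinely nontrivial support types are only three: $\supp(p)$ a claw, $\supp(p)$ the complement of a two-edge path, and $\supp(p)$ the cube with one edge removed. For each of these I would exhibit a small parametric family of factorizations $p = P\odot Q$, with $P$ and $Q$ each a sum of two rank-one tensors whose supports are subcube unions meeting precisely in $\supp(p)$.

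The main obstacle is this last step: showing that the parametric family for each awkward support type is rich enough to realise \emph{every} admissible probability vector with nonnegative parameters. This is where the binomial inequalities of Theorem~\ref{prop} enter. Condition \eqref{star} forces $p$ into $\overline{\RBM_{3,2}} = \Mcal_{3,3}$ --- a separate, easy slice-by-slice check, since on an admissible support some direction has a slice with a vanishing row or column --- and within each support class the sign of the relevant $2\times 2$ minor of $(p_{ijk})$ dictates how to choose a free ``aspect ratio'' on one of the slices of $P$ so that the leftover $2\times2$ linear system for the remaining parameters is solvable over $\RR_{\ge 0}$. Since there are only finitely many support classes and each reduces to a $2\times2$ system, the argument is elementary once organised, but the case analysis and the nonnegativity verifications are the bulk of the work.
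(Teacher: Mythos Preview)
Your necessity argument is correct and essentially identical to the paper's: once one Hadamard factor vanishes at $\mathbf x$, the two zero coordinates among $\{a_i,b_j,c_k\}$ and $\{d_i,e_j,f_k\}$ force $P$ to vanish at some Hamming neighbour, whether those zeros sit in the same slot or in different slots.

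For sufficiency, your plan is viable but far heavier than what is needed, and the closing paragraph introduces a risk of circularity. The paper does not enumerate support types at all. Instead, after using the cube symmetry once to arrange $p_{000}=p_{010}=0$, it writes down a \emph{single} explicit Hadamard factorization
\[
p=\left[\begin{array}{cc||cc}
0 & 0 & p_{001} & p_{011}\\ p_{101} & p_{111} & p_{101} & p_{111}
\end{array}\right]\ast
\left[\begin{array}{cc||cc}
0 & 0 & 1 & 1\\ \dfrac{p_{100}}{p_{101}} & \dfrac{p_{110}}{p_{111}} & 1 & 1
\end{array}\right],
\]
both factors visibly of nonnegative rank at most two whenever $p_{101},p_{111}>0$. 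If one of these denominators vanishes, condition~\eqref{star} applied at that new zero forces enough additional zeros that the support of $p$ collapses to two disjoint edges, so $p\in\Mcal_{3,2}\subseteq\RBM_{3,2}$ directly. That is the entire argument: one formula plus a one-line degeneration, with no parametric families, no $2\times2$ linear systems, and no sign analysis of minors.

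Your appeal to ``the binomial inequalities of Theorem~\ref{prop}'' and to $\overline{\RBM_{3,2}}=\Mcal_{3,3}$ is the weak point. Theorem~\ref{prop} concerns only the interior of $\Delta_7$ and gives no leverage on $\partial\Delta_7$; more seriously, the equality $\overline{\RBM_{3,2}}=\Mcal_{3,3}$ is Theorem~\ref{conj:eq}, whose proof (via Lemma~\ref{ab}) \emph{uses} Proposition~\ref{prop:bd}. Invoking it here would be circular. You could sidestep this by verifying $p\in\Mcal_{3,3}$ directly on each support type, as you suggest, but that detour is unnecessary once you have the explicit factorization above.
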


\begin{proof}
First we show that $p \in \RBM_{3,2} \cap \partial \Delta_7$ satisfies condition \eqref{star}. Since $p$ lies on the boundary of $\Delta_7$, one of its entries vanishes. 
Assume without loss of generality $p_{000} = 0$. Then condition \eqref{star} means that
$p_{100} p_{010} p_{001} = 0$.
Since $p \in \RBM_{3,2}$, it is the product of two distributions in $\mathcal{M}_{3,2}$. 
That is, 
$$p_{ijk} = (q_{ijk} + r_{ijk}) ( s_{ijk} + t_{ijk} ),$$
where $q, r, s, t$ are rank one non-negative $2 \times 2 \times 2$ tensors. Up to swapping factors the $(0,0,0)$ entry of the tensor $q + r$ must vanish. Hence $q_{000} = r_{000} = 0$. Since $q$ and $r$ are rank one, they must vanish on a slice. Both $q$ and $r$ vanish in at least one of the locations $(0,0,1)$, $(0,1,0)$ and $(1,0,0)$, hence so does $p$.

For the converse, we consider some $p \in \partial \Delta_7$ satisfying \eqref{star} and we aim to show that $p \in \RBM_{3,2}$. As before, we can assume $p_{000} = 0$. Condition \eqref{star} implies that one of $p_{001}, p_{010}, p_{100}$ must also vanish. We reorder indices such that $p_{010}$ vanishes. The distribution admits the Hadamard factorization
$$ p = \left[
\begin{array}{cc||cc}
0 & 0 & p_{001} & p_{011} \\ p_{100} & p_{110} & p_{101} & p_{111}
\end{array}
\right] = \left[
\begin{array}{cc||cc}
0 & 0 & p_{001} & p_{011} \\ p_{101} & p_{111} & p_{101} & p_{111}
\end{array}
\right] * \left[
\begin{array}{cc||cc}
0 & 0 & 1 & 1 \\ \frac{p_{100}}{p_{101}} & \frac{p_{110}}{p_{111}} & 1 & 1 \end{array}
\right].$$
If $p_{101},p_{111} \neq 0$, both factors are non-negative rank two and the distribution lies in $\RBM_{3,2}$. If $p_{101} = 0$, then $p_{111}p_{100}p_{001} = 0$ and if $p_{111} = 0$ then $p_{110}p_{101}p_{011} = 0$. In both of these cases the distribution consists of two pairs of non-zero adjacent entries, hence lies in $\mathcal{M}_{3,2}$, which is a subset of $\RBM_{3,2}$. Hence in all cases the distribution lies in $\RBM_{3,2}$.
\end{proof}

Condition \eqref{star} is stricter than the restriction of the inequalities in Theorem \ref{prop} to the boundary of the simplex. 
The model $\RBM_{3,2}$ is a semi-algebraic subset of the simplex that is not closed. We give an example of a distribution that lies in the closure of the model, but not in the model.

\begin{example} \label{counter}
Consider the distribution
$$ p_{ijk} = \begin{cases} \frac13, &  (i,j,k) = (0,0,1), (0,1,0), (1,0,0) \\
0, & \text{ otherwise} .\end{cases}$$
Observe that $p \in \mathcal{M}_{3,3}$, since $p = \frac13 (e_0 \otimes e_0 \otimes e_1 + e_0 \otimes e_1 \otimes e_0 + e_1 \otimes e_0 \otimes e_0 )$ has non-negative rank three and entries summing to one. Since $p$ does not satisfy the conditions in Proposition~\ref{prop:bd}, $p \notin \RBM_{3,2}$. We give a sequence of distributions $(p_n) \subset \RBM_{3,2}$, such that $p_n \to p$. Consider 
$$ p_n 
\propto
\left[
\begin{array}{cc||cc}
\epsilon & 1 & 1 & \epsilon \\ 1 & \epsilon & \epsilon & \epsilon^4
\end{array}
\right] ,$$
where $||$ divides the two slices of the tensor, and $\epsilon = \frac1n$. 
As $n \to \infty$, 
$p_n \to p$. 
The scaling factor can be subsumed to either factor in the following decomposition.  
$$ \hspace{-48ex} p_n 
\propto
 \left[ \begin{array}{cc||cc}
\epsilon & 1 & \epsilon^2 & \epsilon \\ 1 & \epsilon & \epsilon & \epsilon^2
\end{array} \right] *  \left[ \begin{array}{cc||cc}
1 & 1 & \epsilon^{-2} & 1 \\ 1 & 1 & 1 & \epsilon^2
\end{array} \right] $$
$$ \hspace{2.6ex} = \left( \begin{bmatrix} \epsilon \\ 1 \end{bmatrix} \otimes \begin{bmatrix} 1 \\ 0 \end{bmatrix} \otimes \begin{bmatrix} 1 \\ \epsilon \end{bmatrix} + \begin{bmatrix} 1 \\ \epsilon \end{bmatrix} \otimes \begin{bmatrix} 0 \\ 1 \end{bmatrix} \otimes \begin{bmatrix} 1 \\ \epsilon \end{bmatrix} \right) * \left( \begin{bmatrix} 1 \\ 1 \end{bmatrix} \otimes \begin{bmatrix} 1 \\ 1 \end{bmatrix} \otimes \begin{bmatrix} 1 \\ 0 \end{bmatrix} + \begin{bmatrix} \epsilon^{-1} \\  \epsilon \end{bmatrix} \otimes \begin{bmatrix}  \epsilon^{-1} \\  \epsilon \end{bmatrix} \otimes \begin{bmatrix} 0 \\ 1 \end{bmatrix} \right) $$
This decomposition shows that $p_n \in \RBM_{3,2}$ for each $n$. Hence $\RBM_{3,2}$ is not closed. 
\end{example}

In the example above, the entries of one of the tensors in the decomposition are unbounded as $n \to \infty$. They are multiplied by very small entries in the other term so that the limiting tensor $p$ is bounded. Such situations can be avoided on the interior of the simplex, where the model $\RBM_{3,2}$ is closed (see Lemma \ref{lemma}). 

\begin{proposition} \label{BW}
The model $\mathcal{M}_{n,k}$ is closed for all $n$ and $k$.
\end{proposition}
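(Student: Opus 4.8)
The plan is to show that $\mathcal{M}_{n,k}$ is closed by exploiting compactness. Recall that $\mathcal{M}_{n,k}$ consists of the tensors $p = \sum_{i=1}^k a_i \otimes b_i \otimes \cdots \otimes c_i$ with $a_i, b_i, \ldots, c_i \in \RR_{\geq 0}^2$ and $\sum_{x} p_x = 1$. The first step is to normalize the rank-one summands: writing $w_i = \bigl(\sum a_i\bigr)\bigl(\sum b_i\bigr)\cdots\bigl(\sum c_i\bigr) \geq 0$, each nonzero summand can be rescaled so that $a_i, b_i, \ldots, c_i$ each lie in the probability simplex $\Delta_1$, and then $p = \sum_{i=1}^k w_i\, (a_i \otimes b_i \otimes \cdots \otimes c_i)$ with $\sum_i w_i = 1$, since the entries of $p$ sum to one. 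Thus $\mathcal{M}_{n,k}$ is exactly the image of the compact set $\Delta_{k-1} \times (\Delta_1)^{kn}$ (weights times $k$ choices of $n$ distribution vectors in $\Delta_1$) under the continuous map $(w, (a_i), (b_i), \ldots, (c_i)) \mapsto \sum_i w_i\, a_i \otimes b_i \otimes \cdots \otimes c_i$.

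The second step is the standard fact that a continuous image of a compact set is compact, hence closed: $\mathcal{M}_{n,k}$ is the image of a compact set under a continuous (indeed polynomial) map into $\RR^{2^n}$, so it is compact, and in particular closed in $\Delta_{2^n-1}$.

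The one point requiring care — and the only real obstacle — is the normalization in the first step when some rank-one summand is the zero tensor, i.e.\ when one of $a_i, \ldots, c_i$ is the zero vector so that $w_i = 0$; such a summand cannot be rescaled to have factors in $\Delta_1$. This is handled by observing that a zero summand contributes nothing, so we may simply assign it weight $w_i = 0$ and choose its factors to be any fixed point of $\Delta_1$ (say $e_0$); the representation of $p$ is unchanged. The excerpt's convention that $p$ is not identically zero guarantees $\sum_i w_i > 0$ before normalizing the weights, so $(w_i) \in \Delta_{k-1}$ is well-defined. Conversely every element of the image has nonnegative entries summing to one, so it lies in $\Delta_{2^n-1}$ and has nonnegative rank at most $k$, hence lies in $\mathcal{M}_{n,k}$; this shows the image is exactly $\mathcal{M}_{n,k}$ and completes the argument.
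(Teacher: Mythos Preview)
Your proof is correct. The paper's argument is closely related but organized differently: rather than normalizing the parametrization to obtain a compact domain, the paper works directly with a convergent sequence $p_n \to p$ in $\mathcal{M}_{n,k}$, writes each $p_n$ as a sum of $k$ nonnegative rank-one tensors, observes that these summands have entries bounded by $1$ (since they are nonnegative and their sum has entries at most $1$), and then applies Bolzano--Weierstrass iteratively to extract a subsequence along which all $k$ summands converge. Your approach front-loads the work into the normalization step (handling the zero-summand case carefully), after which closedness drops out of the general fact that continuous images of compact sets are compact; the paper's approach skips the normalization by exploiting the automatic boundedness of nonnegative summands, at the cost of a slightly more hands-on sequential argument. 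Both are standard and equally valid; yours is perhaps more conceptual, theirs slightly more self-contained.
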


\begin{proof}
Consider a convergent sequence of tensors $p_n \to p$, where each $p_n \in \mathcal{M}_{n,k}$. We show that the limiting tensor $p$ also lies in $\mathcal{M}_{n,k}$. Each $p_n$ can be written as the sum of $k$ non-negative rank one tensors $p_n = a_n + b_n + \cdots + c_n$. Since the entries of $p_n$ are bounded above by 1, and the entries of $a_n, b_n, \ldots, c_n$ are non-negative, the entries of $a_n, b_n , \ldots, c_n$ are also bounded above by 1. By the Bolzano Weierstrass Theorem, there exists a subsequence of the $a_n$, call it $a_{n_j}$, that  converges. Its limit, $a$, is a non-negative rank one tensor. Taking $p_{n_j} \to p$ as our new convergent sequence, we repeat the argument to find a convergent subsequence of the $b_{n_j}$ which converges to a non-negative rank one tensor $b$. Repeating $k$ times we obtain a subsequence of the $p_n$ whose limit is $a + b + \cdots + c$. Hence $p = a + b + \cdots + c \in \mathcal{M}_{n,k}$.
\end{proof}

\section{Equality of $\boldsymbol{\RBM_{3,2}}$ and $\boldsymbol{\mathcal{M}_{3,3}}$} \label{m33}

We prove Theorem \ref{conj:eq} by proving the two directions of the containment in two lemmas. 
The second sentence of the theorem (equality on the interior of the simplex) follows from the first (equality of the model closures) by the fact that $\RBM_{3,2}$ is closed on the interior of the simplex (see Lemma \ref{lemma}). 

\begin{lemma} \label{ab}
We have the containment of statistical models $\RBM_{3,2} \subseteq \mathcal{M}_{3,3}$.
\end{lemma}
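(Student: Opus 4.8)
\textbf{Proof proposal for Lemma \ref{ab}.}

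The plan is to show that any distribution in $\RBM_{3,2}$, which by the parametrization \eqref{eq:rbmparametrization} is a Hadamard product of two non-negative rank at most two tensors, can be rewritten (up to scaling) as a sum of three non-negative rank one tensors. Write $p \propto (a \otimes b \otimes c + d \otimes e \otimes f) * (a' \otimes b' \otimes c' + d' \otimes e' \otimes f')$ with all vectors in $\RR_{\geq 0}^2$. Expanding the Hadamard product of the two sums gives four terms, each a Hadamard product of two rank one tensors, hence itself rank one: so $p$ is \emph{a priori} a non-negative combination of four rank one tensors, placing it in $\Mcal_{3,4}$. The task is to reduce four terms to three.

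The key observation is that the four rank one summands are not generic: they are built from only two ``choices'' in each of the three tensor factors. Concretely, in the first slot the vectors appearing are $a*a'$, $a*d'$, $d*a'$, $d*d'$ — all Hadamard products of one vector from $\{a,d\}$ with one from $\{a',d'\}$, i.e. they lie in the (at most) two-dimensional span structure coming from just four vectors $a,d,a',d'$ in $\RR^2$. The four summands are indexed by $(0,0),(0,1),(1,0),(1,1)$ recording which of the two terms is chosen in factor one and in factor two. I would exploit that in $\RR^2$ any three of the four vectors $a*a', a*d', d*a', d*d'$ satisfy a linear relation, and more usefully that the multiplicative structure forces a collinearity: for instance $(a*a')*(d*d') = (a*d')*(d*a')$ entrywise, which is the statement that the $2\times 2$ determinant of the matrix with these as scaled columns vanishes. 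This is exactly the kind of rank-one/collinearity constraint that should let two of the four terms be merged, or one of them be absorbed. So the concrete step is: use such an entrywise identity in one of the three tensor factors to combine the four summands into three non-negative rank one tensors, then normalize so the entries sum to one, giving membership in $\Mcal_{3,3}$.

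I would carry this out as follows. Step 1: expand the Hadamard product into the four-term sum and record the index structure. Step 2: isolate a single tensor factor, say the first, and write down the entrywise multiplicative relation among $a*a', a*d', d*a', d*d'$; translate this into a rewriting of a $2\times 2$ block of the expansion. Step 3: use this to express one of the four rank one tensors as a non-negative combination of the other three (or as a limit of such, handling the closure), being careful that the coefficients stay non-negative — this may require choosing \emph{which} factor to collapse depending on the combinatorics of zeros in the eight vectors. Step 4: rescale to land in the simplex. Since the lemma is a statement about model closures ($\RBM_{3,2} \subseteq \Mcal_{3,3}$ with $\Mcal_{3,3}$ closed by Proposition \ref{BW}), it suffices to prove the inclusion on a dense subset, e.g. for strictly positive $p$ with all eight vectors having nonzero entries, and then pass to the closure.

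The main obstacle I anticipate is Step 3: ensuring the reduction from four to three rank one terms can always be done with \emph{non-negative} coefficients. A naive linear-algebra elimination will express the fourth term with coefficients that may be negative. The multiplicative (Hadamard) structure is what should save us — the relevant relation is not an arbitrary linear dependence but the vanishing of a $2\times 2$ determinant, which typically lets one rewrite a sum $u\otimes(\cdots)+v\otimes(\cdots)$ with $u,v$ collinear as a single term plus a correction that is again a positive combination. Pinning down exactly this, and verifying it in the possibly degenerate cases where some of $a,d,a',d',\dots$ are proportional or have zero entries, is where the real work lies; invoking closedness of $\Mcal_{3,3}$ to avoid the boundary degeneracies should keep it manageable.
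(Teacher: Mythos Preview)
Your plan diverges from the paper's, and the gap you flag in Step~3 is real and is not closed by what you propose. The identity $(a*a')_i(d*d')_i=(a*d')_i(d*a')_i$ is a \emph{multiplicative} relation among the four first-slot vectors in $\RR^2$; it does not yield a linear dependence, and whatever linear dependence does exist among three of those vectors (automatic since $\dim=2$) has no reason to extend compatibly to the vectors appearing in the second and third tensor slots. So there is no mechanism here for absorbing one of the four rank-one summands into the others with non-negative coefficients, and the ``collapse one factor'' step does not go through. You correctly identify this as the crux; the proposal does not supply the missing idea, and invoking closedness of $\Mcal_{3,3}$ only helps with boundary degeneracies, not with this central reduction.

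The paper avoids the parametrization entirely. It uses the implicit description of $\RBM_{3,2}$ already established in Theorem~\ref{prop} (and the face-slicing picture of Section~\ref{triang}): for a strictly positive $p\in\RBM_{3,2}$ one can relabel indices so that two \emph{adjacent} slice determinants, $d_{1,1}$ and $d_{2,1}$ from~\eqref{dets}, have opposite signs. From that sign condition the paper writes down an explicit non-negative rank-three decomposition of $p$ (two terms supported on disjoint slices, plus one correction term), with closed-form scalar entries whose non-negativity is exactly equivalent to $d_{1,1}\cdot d_{2,1}\le 0$. Boundary points are handled separately via Proposition~\ref{prop:bd}. So the engine of the argument is the semi-algebraic description: non-negativity of the rank-three decomposition is read off directly from the sign pattern of the slice determinants, and this is the ingredient your approach is missing.
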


\begin{proof}
Consider a distribution $p \in \RBM_{3,2}$. If $p \in \partial \Delta_7$ then it satisfies \eqref{star} and we can assume without loss of generality $p_{000} = p_{001} = 0$. Then
$$ p = \left[
\begin{array}{cc||cc}
0 & 0 & 0 & 0 \\ p_{100} & 0 & p_{101} & 0
\end{array}
\right] + \left[
\begin{array}{cc||cc}
0 & p_{010} & 0 & p_{011} \\ 0 & 0 & 0 & 0
\end{array}
\right] + \left[
\begin{array}{cc||cc}
0 & 0 & 0 & 0 \\ 0 & p_{110} & 0 & p_{111} \end{array}
\right]$$
is an expression for $p$ as the sum of three non-negative rank one terms, hence $p \in \mathcal{M}_{3,3}$.

It remains to consider distributions $p$ with no
entries vanishing. We name the six determinants by $d_{i,j}$ where $i \in \{1,2,3\}$ denotes which index is fixed in the determinant, and $j \in \{0,1\}$ gives the value of the fixed index:
\begin{equation} \label{dets} 
\begin{matrix} 
d_{1,0} = p_{000} p_{011} - p_{001} p_{010}, & d_{1,1} = p_{100} p_{111} - p_{101} p_{110}, \\ 
d_{2,0} = p_{000} p_{101} - p_{001} p_{100}, & d_{2,1} = p_{010} p_{111} - p_{011} p_{110}, \\ 
d_{3,0} = p_{000} p_{110} - p_{010} p_{100}, & d_{3,1} = p_{001} p_{111} - p_{011} p_{101}. 
\end{matrix} \end{equation}
As we will see in Section \ref{triang} and Figure \ref{fig1}b, 
we can relabel indices such that  determinants $d_{2,1}$ and $d_{1,1}$ have opposite signs. We can write $p$ as
$$\label{p} p = \left[
\begin{array}{cc||cc}
p_{000} & 0 & p_{001} & 0 \\ 0 & 0 & 0 & 0
\end{array}
\right] + \left[
\begin{array}{cc||cc}
0 & 0 & 0 & 0 \\ p_{100} & x & p_{101} & \frac{p_{101}}{p_{100}} x
\end{array}
\right] + \left[
\begin{array}{cc||cc}
0 & p_{010} & 0 & p_{011} \\ 0 & y & 0 & \frac{p_{011}}{p_{010}} y \end{array}
\right], $$
where
$ x = \frac{ p_{100} p_{111} \cdot d_{2,1} }{ p_{101} d_{2,1} - p_{011} d_{1,1} }$ and $y = \frac{p_{010} p_{111} \cdot d_{1,1} }{ p_{011} d_{1,1} - p_{101} d_{2,1} } $.
Since the signs of $d_{2,1}$ and $d_{1,1}$ are different this expression for $p$ is non-negative rank three, hence $p \in \mathcal{M}_{3,3}$. The denominator of $x$ and $y$ is non-zero, provided that $d_{2,1}$ or $d_{1,1}$ is non-zero. If some determinant vanishes, a non-negative rank three decomposition is obtained from the rank one tensor of that face plus the non-negative rank two representation of the opposite face.

Note that $x$ and $y$ are not both non-negative for $p \notin \RBM_{3,2}$ by Figure~\ref{2}: there is no way to rotate or reflect the cube such that determinants $d_{2,1}$ and $d_{2,2}$ have opposite sign.
\end{proof}

\begin{lemma} \label{ba}
We have the containment of statistical models $\mathcal{M}_{3,3} \subseteq \overline{\RBM_{3,2}}$.
\end{lemma}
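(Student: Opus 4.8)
The plan is to prove $\mathcal{M}_{3,3} \subseteq \overline{\RBM_{3,2}}$ by showing that a generic (strictly positive, interior) distribution of non-negative rank at most three satisfies one of the six pairs of binomial inequalities from Theorem~\ref{prop}, and then handle the boundary by a closure argument. Since $\overline{\RBM_{3,2}}$ is described on the interior of $\Delta_7$ by the union of the six basic semi-algebraic sets (and contains its closure), it suffices to take $p \in \mathcal{M}_{3,3}$ strictly positive and produce the required sign pattern among the six determinants $d_{i,j}$ in \eqref{dets}. The key observation, which I would make precise using the discussion in Section~\ref{triang}, is that the six determinants cannot have an arbitrary sign pattern: they correspond to the $2\times 2$ minors of the three pairs of parallel slices of the tensor, and these satisfy combinatorial constraints tied to triangulations of the $3$-cube. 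I would first reduce to showing: \emph{if $p$ has non-negative rank $\le 3$, then for at least one of the three coordinate directions, the two parallel-slice determinants $d_{i,0}$ and $d_{i,1}$ have the same sign (weakly).} This is exactly the statement that $p$ lies in one of the six sets of Theorem~\ref{prop}.

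To establish that reduction, I would argue contrapositively: suppose $p$ is strictly positive and in all three directions the two parallel determinants have strictly opposite signs. Then I want to show $p$ cannot have non-negative rank $3$; equivalently, $\rrk(p) \ge 4$, forcing $p$ to fill essentially the whole simplex, contradicting that $\mathcal{M}_{3,3} \ne \Delta_7$. Concretely, a $2\times 2\times 2$ tensor with ordinary (complex) rank $\le 2$ and a non-negative rank $3$ decomposition is constrained; the locus where $\crk(p) = 2$ is the hyperdeterminant hypersurface, and the sign of the hyperdeterminant together with the slice-minor signs governs which non-negative rank is achievable — this is the content of the $\mathcal{M}_{3,2}$ description in \eqref{ineq}. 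I would use the fact that a tensor admitting a non-negative rank $3$ decomposition with all entries positive lies in the image of the parametrization $p = \sum_{i=1}^3 a_i \otimes b_i \otimes c_i$, and then analyze how the slice minors $d_{i,0}, d_{i,1}$ depend on the three rank-one summands. Each summand contributes a rank-one (hence minor-zero) piece to each slice, and the cross terms produce a structured expression; I expect to show that the "all three directions have opposite signs" pattern is incompatible with only three summands by a dimension count or by exhibiting it as a relatively open condition that is avoided.

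Alternatively — and this is likely the cleaner route — I would invoke the connection to triangulations of the $3$-cube promised in Section~\ref{triang}: the sign vector $(\operatorname{sgn} d_{1,0}, \operatorname{sgn} d_{1,1}, \dots, \operatorname{sgn} d_{3,1}) \in \{-,0,+\}^6$ of a strictly positive tensor determines, via the secondary-fan picture, a regular subdivision of the cube. The six sets of Theorem~\ref{prop} correspond to the six sign patterns in which some parallel pair agrees; the patterns where all three parallel pairs disagree correspond to the two "non-regular-looking" configurations, and I would show these require four rank-one terms (they are precisely the distributions with four modes, handled in Corollary~\ref{conj:MM}, which no $p\in\mathcal{M}_{3,3}$ has by \cite[Proposition~3.10]{montufar2015does}). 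So: a strictly positive $p\in\mathcal{M}_{3,3}$ has at most three modes, this forces the slice-minor sign vector out of the two "bad" patterns, hence into one of the six good ones, hence $p\in\RBM_{3,2}$ by Theorem~\ref{prop}. Finally, for $p\in\mathcal{M}_{3,3}\cap\partial\Delta_7$: since $\mathcal{M}_{3,3}$ is closed (Proposition~\ref{BW}) and is the closure of its interior points in $\mathcal{M}_{3,3}$ (one can perturb any rank-$\le 3$ decomposition to a strictly positive one in $\mathcal{M}_{3,3}$, e.g.\ by adding $\varepsilon$ times the uniform rank-one tensor and renormalizing), we get $p = \lim p_n$ with $p_n \in \mathcal{M}_{3,3}$ strictly positive, hence $p_n \in \RBM_{3,2}$ by the interior case, hence $p \in \overline{\RBM_{3,2}}$.

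The main obstacle I anticipate is making the reduction "no four modes $\Rightarrow$ good slice-minor sign pattern" fully rigorous: one must verify that the two excluded sign patterns on the six determinants are \emph{exactly} the ones forcing four modes, and conversely that every one of the remaining $3^6$-minus-those patterns is realized (or at least handled) by the six sets — in particular checking the degenerate cases where some $d_{i,j} = 0$, where the argument in the proof of Lemma~\ref{ab} (rank-one face plus rank-two opposite face) must be quoted carefully. I would organize this as a short case analysis on which parallel pairs agree in sign, mirroring Figure~\ref{fig1}b, and lean on Theorem~\ref{prop} so that I never have to re-derive the Minkowski-sum computation. If the mode-counting route proves delicate, the fallback is the direct parametric argument: write $p = \sum_{i=1}^3 a_i\otimes b_i\otimes c_i$ and compute the $d_{i,j}$ explicitly as polynomials in the six vectors, then show the sign constraint directly — more computational, but self-contained.
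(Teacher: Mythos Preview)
Your primary route has a genuine gap. The implication ``no four modes $\Rightarrow$ good sign pattern'' is false, and this is the step your argument relies on. The proof of Corollary~\ref{conj:MM} only establishes the direction ``four modes $\Rightarrow$ bad sign pattern''; its contrapositive is ``good sign pattern $\Rightarrow$ at most three modes,'' which is useless to you. Concretely, take the strictly positive tensor with log-entries
\[
l_{000}=-\tfrac{1}{10},\quad l_{001}=l_{010}=l_{100}=l_{111}=0,\quad l_{011}=l_{101}=l_{110}=2.
\]
A direct check shows $L_{1,0},L_{2,0},L_{3,0}>0$ and $L_{1,1},L_{2,1},L_{3,1}<0$, so this tensor has the bad sign pattern and lies outside $\RBM_{3,2}$. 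Yet $l_{000}<l_{001}$, so $000$ is not a mode and the tensor has at most three modes. Hence ``at most three modes'' does not force membership in any of the six sets of Theorem~\ref{prop}. If you then try to rescue the argument by restricting the implication to $p\in\Mcal_{3,3}$, you are asserting exactly what you want to prove. The result you cite from~\cite[Proposition~3.10]{montufar2015does} is too weak an obstruction: the bad-sign-pattern region is strictly larger than the four-mode region.

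Your stated fallback---write $p$ parametrically as a sum of three non-negative rank-one terms and analyze the $d_{i,j}$ directly---is in fact what the paper does, and it is where the actual work lies. The paper writes a generic element of $\Mcal_{3,3}$ as $p+q$ with $p$ of non-negative rank two and $q=a\otimes b\otimes c$ of rank one, normalizes so that $p$ has the all-$+$ sign pattern of~\eqref{ineq}, and then argues by contradiction: if $p+q$ had the bad sign pattern, then for some $h\in\{0,1\}$ all three determinants $d_{1,h},d_{2,h},d_{3,h}$ would have flipped sign after adding $q$. Each such flip yields a bilinear inequality in the entries of $a,b,c$ and $p$; a short but genuine case analysis (eight options, labeled $I_{ab}^{(1)}$, $I_{ab}^{(2)}$, etc.) shows these inequalities are mutually incompatible with $p$ satisfying~\eqref{ineq}. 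You should develop this argument rather than the mode-counting one; it does not follow from any soft dimension count or triangulation picture.
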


\begin{proof} 
Consider a distribution $p + q \in \mathcal{M}_{3,3}$ where $p$ is non-negative rank two, $q$ is non-negative rank one, and no entries of $p$ or $q$ vanish. Up to swapping values $0$ and $1$ in one index, $p$ being non-negative rank two means it satisfies the six binomial inequalities in \eqref{ineq}. Equivalently, its determinants $d_{i,j}$ from \eqref{dets} have sign pattern $(+,+,+,+,+,+)$. We assume for contradiction that $p + q \notin \RBM_{3,2}$. This means $p + q$ has three ``$-$" in its sign pattern. After adding
tensor $q$, {\em three} determinants have swapped sign: $d_{1,h}$, $d_{2,h}$,
$d_{3,h}$ for $h = 0$ or $1$. 

Take non-negative vectors $a, b, c \in \mathbb{R}_{\geq 0}^2$ such that $q_{ijk} = a_i b_j c_k$. Assume 
determinant $d_{3,h}$ of $p + q$ is negative: $( p_{00h} + a_0 b_0 c_h) ( p_{11h} + a_1 b_1 c_h ) - ( p_{01h} + a_0 b_1 c_h )(p_{10h} + a_1 b_0 c_h ) < 0$.
Multiplying this expression out, and using $p_{00h} p_{11h} \geq p_{01h} p_{10h}$, gives
\begin{equation} \label{i} 
p_{00h} a_1 b_1 + p_{11h} a_0 b_0 < p_{10h} a_0 b_1 + p_{01h} a_1 b_0 .
\end{equation}
Hence either $p_{00h} b_1 < p_{01h} b_0$ or $p_{11h} b_0 < p_{10h} b_1$ must hold, and likewise either $p_{00h} a_1 < p_{10h} a_0$ or $p_{11h} a_0 < p_{01h} a_1$ must hold. Furthermore, rearranging \eqref{i} yields
$$ \frac{1}{p_{00h}} ( p_{00h} a_1 - p_{10h} a_0 ) ( p_{00h} b_1 - p_{01h} b_0 ) + \left( p_{11h} - \frac{p_{10h} p_{01h}}{p_{00h}} \right) a_0 b_0 < 0.$$
Since the last term is non-negative, this implies that
$\frac{1}{p_{00h}} ( p_{00h} a_1 - p_{10h} a_0 ) ( p_{00h} b_1 - p_{01h} b_0 ) < 0$,
hence exactly one of $p_{00h} a_1 < p_{10h} a_0$ and $p_{00h} b_1 < p_{01h} b_0$ holds. Similarly, \eqref{i} yields 
$$ \frac{1}{p_{11h}} ( p_{11h} a_0 - p_{01h} a_1 ) ( p_{11h} b_0 - p_{10h} b_1 ) + \left( p_{00h} - \frac{p_{01h} p_{10h}}{p_{11h}} \right) a_1 b_1 < 0 ,$$ 
implying exactly one of $p_{11h} a_0 < p_{01h} a_1$ and $p_{11h} b_0 < p_{10h} b_1$ holds. Repeating the above for determinants $d_{2,h}$ and $d_{1,h}$ gives the following $2^3 = 8$ options:
$$ \begin{array}{lllll} I_{ab}^{(1)} = \{ p_{00h} b_1 < p_{01h} b_0, & p_{11h} a_0 < p_{01h} a_1 \}, & I_{ab}^{(2)} = \{ p_{11h} b_0 < p_{10h} b_1, & p_{00h} a_1 < p_{10h} a_0 \}, \\
 I_{ac}^{(1)} = \{ p_{0h0} a_1 < p_{1h0} a_0 , & p_{1h1} c_0 < p_{1h0} c_1 \}, &  I_{ac}^{(2)} = \{ p_{1h1} a_0 < p_{0h1} a_1 , & p_{0h0} c_1 < p_{0h1} c_0 \} , \\
I_{bc}^{(1)} =  \{ p_{h00} c_1 < p_{h01} c_0 , & p_{h11} b_0 < p_{h01} b_1 \}, &  I_{bc}^{(2)} =  \{ p_{h11} c_0 < p_{h10} c_1 , & p_{h00} b_1 < p_{h10} b_0 \} .\end{array}$$
If either inequality from $I_{ab}^{(1)}$ is satisfied, the inequalities of $I_{ab}^{(2)}$ cannot be satisfied, and likewise for $I_{ac}$ and $I_{bc}$.
To conclude the proof, we derive a contradiction from these options.

Let $h = 0$. 
Assume the inequalities in $I_{ab}^{(1)}$ hold. Then one of the inequalities from $I_{bc}^{(2)}$ is satisfied, hence $I_{bc}^{(1)}$ cannot hold. If $I_{ac}^{(1)}$ also holds, combining $p_{110} a_0 < p_{010} a_1$ from $I_{ab}^{(1)}$ with $p_{000} a_1 < p_{100} a_0$ from $I_{ac}^{(1)}$ gives $p_{110} p_{000} < p_{010} p_{100}$, contradicting the hypothesis that $p$ satisfies the inequalities in \eqref{ineq}. If $I_{ac}^{(2)}$ holds, combining inequalities involving $c$ gives $p_{000} p_{011} < p_{001} p_{010}$, also a contradiction. Likewise, if $I_{ab}^{(2)}$ holds then $I_{ac}^{(1)}$ must hold. If $I_{bc}^{(1)}$ also holds, combining the inequalities involving $c$ implies $p_{101} p_{000} < p_{100} p_{001}$, a contradiction. If $I_{bc}^{(2)}$ holds, combining inequalities involving $b$ gives $p_{110} p_{000} < p_{100} p_{010}$, also a contradiction. The case $h = 1$ follows by analogous reasoning.

This shows that an open dense subset of $\mathcal{M}_{3,3}$ is contained in $\RBM_{3,2}$. It remains to consider when $p$ or $q$ has some vanishing entry. Such cases are in the closure of the above, hence they lie in the closure of $\RBM_{3,2}$. 
\end{proof}

\section{Connection to triangulations of the three-cube} \label{triang}

Let $\Mcal$ be the statistical model $\mathcal{M}_{3,3} = \overline{\RBM_{3,2}}$.  
We characterize $\Mcal$ on the interior of $\Delta_7$ in terms of triangulations. This allows us to prove Corollary \ref{conj:MM}. We describe below how to triangulate the three-cube using a positive distribution $p \in \Delta_7$. Membership in $\Mcal$ is determined by how this triangulation restricts to the faces of the cube. 

\begin{proposition}\label{figone}
The model $\Mcal$ contains distributions whose triangulations restrict to the faces of the cube as in Figure \ref{fig1}. 
Distributions whose triangulations restrict as in Figure \ref{2} lie outside of $\Mcal$. 
Triangulations in Figure \ref{4} are special cases of those in Figure \ref{fig1} and come from distributions in $\mathcal{M}_{3,2}$.
\end{proposition}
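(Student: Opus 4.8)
\medskip
\noindent Here is the route I would take. The first step is to set up a dictionary between the six determinants $d_{r,s}$ of \eqref{dets} and the combinatorics of a triangulation of the three-cube. For a strictly positive distribution $p$, view $l_{ijk}=\log p_{ijk}$ as a height function on the eight vertices of $[0,1]^3$; the lower envelope of the lifted points $((i,j,k),l_{ijk})$ projects to the regular subdivision of the cube induced by $p$, which for generic $p$ is a triangulation and is uniquely determined by the heights. On the square $2$-face with coordinate $r$ fixed to the value $s$, the four lifted heights determine which of the two diagonals is used, and this is controlled exactly by the sign of the corresponding $2\times 2$ determinant: for instance on $\{x_1=0\}$ the quadrilateral folds along the diagonal $\{(0,0,1),(0,1,0)\}$ when $d_{1,0}>0$ and along $\{(0,0,0),(0,1,1)\}$ when $d_{1,0}<0$. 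Thus the way a triangulation of the cube restricts to the six $2$-faces is precisely the data of the sign vector $(\operatorname{sign} d_{1,0},\operatorname{sign} d_{1,1},\operatorname{sign} d_{2,0},\operatorname{sign} d_{2,1},\operatorname{sign} d_{3,0},\operatorname{sign} d_{3,1})$.

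The second step records the membership test in this language. Since $\RBM_{3,2}$ is closed on the interior of $\Delta_7$ (Lemma~\ref{lemma}), on the interior $\Mcal=\RBM_{3,2}$; and by Theorem~\ref{prop} a strictly positive $p$ lies in $\Mcal$ if and only if there is a coordinate $r$ with $d_{r,0}$ and $d_{r,1}$ of the same (weak) sign --- geometrically, some pair of opposite $2$-faces of the cube receives ``parallel'' diagonals. Equivalently, $p\notin\Mcal$ exactly when every pair of opposite faces receives ``skew'' diagonals.

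With the dictionary and the test in hand, the three assertions are quick. For Figure~\ref{fig1}: each pattern drawn there has at least one parallel pair of opposite faces, so any positive distribution realizing it lies in $\Mcal$, and such distributions exist because each of these patterns is the restriction to the $2$-faces of a triangulation of the three-cube --- automatically regular, as every triangulation of the three-cube is --- and is therefore realized by exponentiating and normalizing a height function supporting it. For Figure~\ref{2}: that pattern has all three pairs of opposite faces skew, so by the test no positive distribution realizing it lies in $\Mcal$; such distributions do occur --- they are the two triangulations of the cube with a central tetrahedron on a quadruple of pairwise non-adjacent vertices, realized for instance by any distribution whose set of modes is a colour class of the cube graph, the same configuration that appears in the proof of Corollary~\ref{conj:MM}. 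For Figure~\ref{4}: those patterns are exactly the ones in which, after relabelling a coordinate if necessary, all six $d_{r,s}$ share a sign; by \cite{ALLMAN201537} (the inequalities \eqref{ineq}, up to reversing two of the three columns) these are precisely the positive points of $\mathcal{M}_{3,2}$, and since such a sign vector in particular has $d_{1,0}$ and $d_{1,1}$ of equal sign, the induced triangulation is a specialization of one in Figure~\ref{fig1}.

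The step I expect to require the most care is the bookkeeping that makes Figures~\ref{fig1} and~\ref{2} a complete list, which is what lets one deduce Corollary~\ref{conj:MM}: one must verify that, up to the order-$48$ symmetry group of the cube acting on the six signs --- generated by the three coordinate swaps, each of which exchanges $d_{r,0}\leftrightarrow d_{r,1}$ and negates the other four determinants, together with the coordinate permutations --- the only sign vector realizable by a height function that has no parallel pair is the central-tetrahedron pattern of Figure~\ref{2}. This is a finite check; the cleanest way to run it is to observe that ``no parallel pair'' forces the corner diagonal to be absent at each of four pairwise non-adjacent vertices, which pins the triangulation down to the one with a central simplex, so that all other realizable sign vectors have a matched pair and are of the type shown in Figure~\ref{fig1}.
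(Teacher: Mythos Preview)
Your proposal is correct and follows essentially the same route as the paper: identify the diagonal on each $2$-face of the cube with the sign of the corresponding $2\times 2$ slice determinant (the paper cites \cite{BPS} for this, you argue it directly from the lower envelope), then read off the three assertions from the semi-algebraic description in Theorem~\ref{prop} and the $\mathcal{M}_{3,2}$ inequalities~\eqref{ineq}. Your additional paragraph on realizability and the symmetry-group completeness check is more than the proposition itself demands, but it is exactly what is needed downstream for Corollary~\ref{conj:MM}, and the paper leaves this implicit.
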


A diagonal lines on a face of a cube in Figures \ref{4}, \ref{fig1}, and \ref{2} indicates the direction that the face is triangulated. 
The empty faces in Figure \ref{fig1} can be triangulated in either of the two possible directions. Relabeling indices does not change membership in our statistical models. 
It corresponds to rotating or reflecting the cubes.

\begin{figure}
	\begin{minipage}{0.25\textwidth}
		\centering
		\includegraphics[width=3cm]{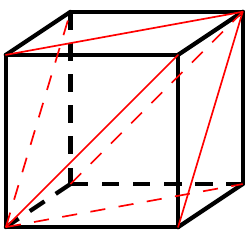}
		\caption{Distributions in $\mathcal{M}_{3,2}$ give (rotations of) this triangulation.}
		\label{4}
	\end{minipage}
	\begin{minipage}{0.03\textwidth}
		\qquad
	\end{minipage}
	\begin{minipage}{0.4\textwidth}
		\centering
		\includegraphics[width=3cm]{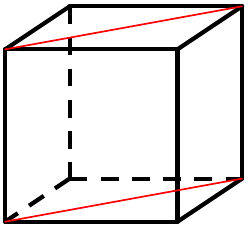}a
		\caption{Two characterizations of the triangulations from $\Mcal$ (up to rotation). Empty faces can be triangulated in either direction.}
		\includegraphics[width=3cm]{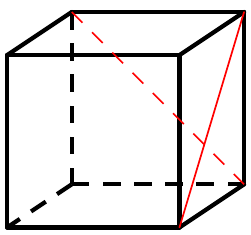}b
		\label{fig1}
	\end{minipage}
	\begin{minipage}{0.03\textwidth}
		\qquad
	\end{minipage}
	\begin{minipage}{0.25\textwidth}
		\centering
		\includegraphics[width=3cm]{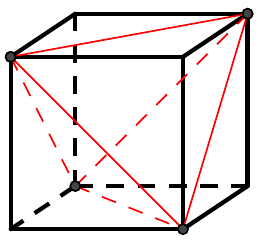}
		\caption{Distributions not in $\Mcal$ give this triangulation  (up to rotation) .}
		\label{2}
	\end{minipage}
\end{figure}

Consider a generic, strictly positive distribution $p \in \Delta_7$. Its tensor $(l_{ijk}) = {\rm log}(p_{ijk})$ of log-probabilities induces a triangulation of the three-cube. For two observed variables, the set-up is shown in \cite[Figure 1]{BPS}. In three dimensions, we do the higher-dimensional analogue: we assign height $l_{ijk}$ to the vertex of the three-cube with coordinates $(i,j,k)$. We take the convex hull of the heights in four-dimensional space.  Then we project the convex hull back to the three-dimensional cube. The facets in the convex hull project to tetrahedra in the cube that combine to make a triangulation. The three-cube has 74 possible triangulations which fall into six triangulation types, see  \cite[Figure 1]{HSYY}. In \cite{BPS} the authors study these triangulations in the context of epistasis in evolutionary biology. 

\begin{proof}[Proof of Proposition \ref{figone}]
 There are 20 linear expressions in the coordinates $l_{ijk}$ whose signs determine the triangulation, see \cite[page 1325]{BPS}. Six of these equations determine how the triangulation restricts to the faces of the cube. These are the logarithms of the binomial equations that define $\Mcal$. Hence we can see whether ${\rm exp}(l_{ijk})$ lies in $\Mcal$ by looking at how the triangulation induced by $(l_{ijk})$ restricts to the faces of the cube. The equations that define $\Mcal_{3,2}$ and $\Mcal_{3,1}$ are also of this form. 

In the language of triangulations, being in $\Mcal$ means the triangulation {\em slices at least one pair of opposite faces in the same direction}, as in Figure \ref{fig1}a. The condition for being in $\mathcal{M}_{3,2}$ is that every pair of opposite faces is sliced in the same direction, with sign compatibility as in Figure \ref{4}. Triangulations of distributions not in $\RBM_{3,2}$ {\em slice every pair of opposite faces in opposing directions}, as in Figure~\ref{2}. An alternate characterization of such triangulations is that every pair of adjacent faces is sliced in a continuous way. If conversely a pair of adjacent faces is sliced in a discontinuous way, as in Figure \ref{fig1}b, the distribution lies in $\Mcal$.
\end{proof}

We can re-phrase Proposition \ref{figone} in terms of the numbering of the triangulation types from \cite[Page 1657]{HSYY}. The model $\RBM_{3,2}$ only contains distributions with triangulation types 3, 4, 5 and 6. Triangulation types 1 and 2 come from distributions that lie outside of the model. Triangulation type 6 is from distributions in $\mathcal{M}_{3,2}$. Note that, in Figure \ref{fig1}, if at least one of the two other pairs of opposite faces are sliced in different directions we get a triangulation of type 3 or 5. If both other pairs are sliced in the same direction, but not with the right sign-compatibility for $\mathcal{M}_{3,2}$ membership, we have type 4.

\begin{proof}[Proof of Corollary \ref{conj:MM}]
	The idea of the proof is to show that distributions with four modes restrict to the faces of the cube as shown in Figure \ref{2}. Assume we have a distribution with four modes. Without loss of generality, the four numbers $l_{000}$, $l_{011}$, $l_{101}$, and $l_{110}$ exceed the values of their neighbours. Consider a face of the cube, for example the face $\langle l_{000}, l_{001}, l_{010}, l_{011} \rangle$. Since $l_{000} \geq l_{001}$ and $l_{011} \geq l_{010}$, we have
	$$ l_{000} + l_{011} - l_{010} - l_{001} \geq 0 ,$$
	which determines how the triangulation of $(l_{ijk})$ restricts to the face. Repeating for the other faces gives the triangulation of the faces shown in Figure \ref{2}. 
	
	Distributions on $\partial \Delta_7 \cap \RBM_{3,2}$ have at least two adjacent entries vanishing, by \eqref{star}. This excludes the possibility of having four modes.
\end{proof}

\section{The boundary of the model} \label{5}

We saw that the statistical model $\Mcal = \Mcal_{3,3} = \overline{\RBM_{3,2}}$ is defined by the binomial inequalities in Theorem \ref{prop}. Setting the inequalities in Theorem \ref{prop} to equalities gives the Zariski closure of the boundary of the model.

\begin{proposition} \label{slice} Distributions on the boundary of $\Mcal$ are given by $2 \times 2 \times 2$ tensors with a $2 \times 2$ slice of rank $\leq 1$.
\end{proposition}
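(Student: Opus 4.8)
The plan is to characterize the boundary of $\Mcal$ as the locus where at least one of the six determinants $d_{i,j}$ in~\eqref{dets} vanishes, and then to recognize that vanishing of a determinant $d_{i,j}$ is precisely the statement that the corresponding $2\times 2$ slice of the tensor has rank $\leq 1$. Concretely, by Theorem~\ref{prop}, on the interior of $\Delta_7$ the model $\Mcal$ is the union of six basic closed semi-algebraic sets, each cut out by two of the inequalities $d_{i,j}\gtrless 0$; its complement in the interior is the union of the six open sets where a sign-incompatible pair of strict inequalities holds. The topological boundary of $\Mcal$ therefore lies in the set where one of these defining inequalities is tight, i.e. where some $d_{i,j}=0$. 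So the first step is to argue that $\partial\Mcal \subseteq \{p : d_{i,j}=0 \text{ for some } i,j\}$, which follows since on the interior any point with all $d_{i,j}\neq 0$ is either in the interior of one of the six basic sets (hence in the interior of $\Mcal$) or in the complement's interior; and on $\partial\Delta_7$ the description of Proposition~\ref{prop:bd} and the fact that boundary-of-simplex points automatically have vanishing slices handles the remaining case.

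Next I would make explicit the dictionary between the binomials $d_{i,j}$ and slice ranks: the six binomials are exactly the $2\times 2$ minors obtained by fixing one of the three indices to $0$ or $1$ and letting the other two range, so $d_{i,j}=0$ says that the resulting $2\times 2$ matrix (a slice of the $2\times 2\times 2$ tensor) is singular, i.e.\ has rank $\leq 1$ (rank $0$ if all four entries vanish). This gives the inclusion $\partial\Mcal\subseteq\{p:\text{some }2\times 2\text{ slice has rank}\leq 1\}$, intersected with $\Delta_7$.

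For the reverse inclusion I would take a distribution $p\in\Delta_7$ with some $2\times 2$ slice of rank $\leq 1$, say $d_{1,0}=0$, and show $p\in\partial\Mcal$: first that $p\in\Mcal$ (which the vanishing of $d_{1,0}$ makes easy — it kills one of the two sign constraints in the basic set~\eqref{ineq3}, so $p$ automatically lies in one of the six sets, or one can invoke the rank-one-face-plus-rank-two-opposite-face decomposition already used in the proof of Lemma~\ref{ab}), and second that $p$ is a limit of points not in $\Mcal$. The latter is the crux: I would perturb $p$ to $p_\varepsilon$ so that $d_{1,0}(p_\varepsilon)$ and the partner determinant $d_{1,1}(p_\varepsilon)$ acquire strictly opposite signs while every other pair of opposite-face determinants is also made sign-incompatible, landing $p_\varepsilon$ in the interior of the complement of $\Mcal$ (triangulation type 1 or 2 in the language of Section~\ref{triang}); letting $\varepsilon\to 0$ recovers $p$, so $p\in\overline{\Mcal^c}\cap\Mcal=\partial\Mcal$. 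One must check such a perturbation exists for a generic $p$ on the slice-rank-$\leq 1$ locus, and then handle non-generic $p$ (several determinants vanishing, or boundary-of-simplex points) by a further limiting argument or by noting they lie in the closure of the generic stratum.

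The main obstacle I expect is the reverse inclusion, specifically exhibiting the perturbation $p_\varepsilon\notin\Mcal$ with $p_\varepsilon\to p$: one needs to verify that vanishing of a single slice determinant is not only necessary but genuinely attained as a boundary point, i.e.\ that the "bad" sign pattern of Figure~\ref{2} is reachable by an arbitrarily small perturbation from every such $p$. This is essentially a transversality/openness check — the six determinants are algebraically independent enough as functions on the interior of $\Delta_7$ that their signs can be prescribed independently near a point where one of them vanishes — but making it rigorous requires either a direct explicit perturbation (as in Example~\ref{counter}) or a dimension count showing the complement of $\Mcal$ has the slice-rank locus in its closure. The boundary-of-simplex cases, where slices vanish for trivial reasons, should be dispatched separately using Proposition~\ref{prop:bd}.
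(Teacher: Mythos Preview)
The paper does not give a separate proof of this proposition: it is treated as immediate from Theorem~\ref{prop}, with the sentence preceding the statement (``Setting the inequalities in Theorem~\ref{prop} to equalities gives the Zariski closure of the boundary of the model'') and the sentence after (``That is, the Zariski closure of the boundary of the model is a union of hypersurfaces $\{d_{i,j}=0\}$'') serving as the entire argument. In particular, the paper reads the proposition as a statement about the \emph{Zariski closure} of $\partial\Mcal$, not as a set-theoretic equality $\partial\Mcal=\{p:\text{some }d_{i,j}(p)=0\}$.

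Your forward inclusion $\partial\Mcal\subseteq\bigcup_{i,j}\{d_{i,j}=0\}$ is correct and matches the paper. But the reverse inclusion you attempt is false as a set-theoretic statement, and your perturbation argument cannot work. Concretely, take a strictly positive $p$ with $d_{1,0}(p)=0$ but $d_{2,0}(p)>0$ and $d_{2,1}(p)>0$. Then $p$ lies in the \emph{interior} of the basic set $\{d_{2,0}\geq 0,\ d_{2,1}\geq 0\}$, hence in the interior of $\Mcal$, not on $\partial\Mcal$. Your proposed perturbation $p_\varepsilon$ would need to make $d_{2,0}$ and $d_{2,1}$ acquire opposite signs, but since both are bounded away from zero at $p$, no small perturbation can achieve this. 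The ``transversality/openness check'' you flag as the main obstacle is not just an obstacle --- the claim itself is wrong.

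The correct version of the reverse direction is only that $\partial\Mcal$ contains a Zariski-dense (equivalently, full $6$-dimensional) piece of each irreducible hypersurface $\{d_{i,j}=0\}$. This follows because $\Mcal$ has nonempty interior and nonempty open complement in $\Delta_7$, so $\partial\Mcal$ is a $6$-dimensional semi-algebraic set contained in $\bigcup_{i,j}\{d_{i,j}=0\}$; hence it meets at least one hypersurface in top dimension, and by the symmetry group acting transitively on the six slices it meets each one. The paper regards this as evident and does not spell it out.
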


That is, the Zariski closure of the boundary of the model is a union of hypersurfaces $\{ d_{i,j} = 0 \}$, for $1 \leq i \leq 3$, $0 \leq j \leq 1$. This is also the Zariski closure of the boundary of the model $\mathcal{M}_{3,2}$ from \cite{ALLMAN201537}. Proposition \ref{slice} says the boundary of $\Mcal$ consists of mixtures of three product distributions with disjoint supports in $\{0,1\}^3$. Mixtures of products with disjoint supports were used in~\cite{NIPS2011_4380} to study the representational power of RBMs. 

The following is a converse result. It implies that $\RBM_{3,2}$ is closed on the interior of the simplex. Furthermore, within the simplex of probability distributions, the Zariski closure of the boundary is contained in the closure of the model. This result (which fails for $\Mcal_{3,2}$) is useful in Section \ref{6} when we study maximum likelihood estimation.

\begin{lemma} \label{lemma}
Every distribution of three binary random variables with a rank one $2 \times 2$ slice, and strictly positive entries, lies in the models $\RBM_{3,2}$ and $\mathcal{M}_{3,3}$. 
\end{lemma}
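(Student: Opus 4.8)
The plan is to put $p$ into a normal form, handle the $\RBM_{3,2}$ containment using Theorem~\ref{prop}, and handle the $\Mcal_{3,3}$ containment by an explicit nonnegative decomposition. A $2\times 2$ slice of the tensor $(p_{ijk})$ has rank at most one precisely when one of the six determinants $d_{i,j}$ in~\eqref{dets} vanishes. Both $\RBM_{3,2}$ and $\Mcal_{3,3}$ are invariant under permuting the three tensor modes and under swapping the values $0$ and $1$ within a mode (these relabelings merely rotate or reflect the underlying cube and preserve nonnegative rank), so I may assume the vanishing slice is the one with first coordinate $0$, that is $d_{1,0}=p_{000}p_{011}-p_{001}p_{010}=0$.

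For $\RBM_{3,2}$ the point is that a vanishing determinant automatically satisfies the sign pattern of one of the six basic semi-algebraic sets of Theorem~\ref{prop}. Concretely, if $d_{1,1}\ge 0$ then $p$ satisfies the two inequalities~\eqref{ineq3}; if $d_{1,1}\le 0$ then $p$ satisfies the first of the five remaining basic sets listed after~\eqref{ineq3}; and if $d_{1,1}=0$ it satisfies both. In every case $p$ lies in the semi-algebraic set of Theorem~\ref{prop}, and since $p$ has strictly positive entries Theorem~\ref{prop} gives $p\in\RBM_{3,2}$; then $p\in\Mcal_{3,3}$ by Lemma~\ref{ab}. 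For a self-contained proof of the $\Mcal_{3,3}$ statement (useful for the maximum likelihood discussion) I would instead argue directly: write the rank-one slice $(p_{0jk})_{j,k}$ as $uv^\top$ with $u,v\in\RR_{>0}^2$, so $e_0\otimes u\otimes v$ is a nonnegative rank-one tensor agreeing with $p$ on the slice $i=0$ and vanishing on $i=1$; then $p-e_0\otimes u\otimes v$ is supported on the positive $2\times 2$ matrix $(p_{1jk})_{j,k}$, which has nonnegative rank at most two (split it into its two columns), exhibiting $p$ as a sum of three nonnegative rank-one tensors with entries summing to one.

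The only genuine content is in the $\RBM_{3,2}$ assertion, and once Theorem~\ref{prop} is invoked it is immediate. If one wanted an explicit Hadamard factorization $p=A*B$ with $A,B$ of nonnegative rank at most two without appealing to Theorem~\ref{prop}, the work would lie in choosing the rank-one factors so that the residual $2\times 2$ matrix appearing in one of the two tensors is forced to be rank one — this amounts to a single linear equation for one free parameter — while checking that this parameter and the resulting residual entries are nonnegative. That nonnegativity check is the delicate step, and it splits according to the sign of $d_{1,1}$, mirroring the two sign chambers that appear in Theorem~\ref{prop}; but since the semi-algebraic description is already available, I would take the one-line route above.
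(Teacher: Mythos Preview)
Your argument is correct and there is no circularity: Theorem~\ref{prop} is proved independently by a Minkowski-sum computation in log-probability space, and Lemma~\ref{ab} is also already established, so both may be invoked here. Your direct $\Mcal_{3,3}$ decomposition (rank-one slice plus a nonnegative rank-two splitting of the opposite slice) is exactly the paper's argument for that half.

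For the $\RBM_{3,2}$ half your route differs from the paper's. You observe that $d_{1,0}=0$ places $p$ in one of the two basic semi-algebraic sets $\{d_{1,0}\ge 0,\ d_{1,1}\ge 0\}$ or $\{d_{1,0}\le 0,\ d_{1,1}\le 0\}$ according to the sign of $d_{1,1}$, and then cite Theorem~\ref{prop}. The paper instead constructs an explicit Hadamard factorization $p=(q+r)\ast(s+t)$: take $q$ rank one with $q$ equal to $p$ on the rank-one slice and equal to a scalar multiple of that slice on the opposite face, the scalar chosen minimally so that $p-q$ is nonnegative with a zero entry; then $p-q$ has at most three nonzero entries in one slice, two of which (Hamming neighbours) form $r$; finally $s$ is the all-ones tensor and $t$ a single-entry tensor correcting the one remaining discrepancy. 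Your approach is shorter and makes the statement ``the Zariski closure of the boundary lies in the model'' immediate from the non-strict inequalities in Theorem~\ref{prop}; the paper's buys an explicit $\RBM_{3,2}$ parametrization without appealing back to the semi-algebraic description, and---contrary to what your last paragraph anticipates---does so uniformly, with no case split on the sign of $d_{1,1}$.
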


\begin{proof}
As in the proof of Lemma \ref{ab}, if the determinant of a distribution $p$ vanishes, a non-negative rank three decomposition is obtained from the rank one tensor of that slice plus the non-negative rank two representation of the opposite slice. This proves the result for $\Mcal_{3,3}$.

It remains to build a decomposition of $p$ as $(q + r)(s + t)$ where $q, r, s, t$ are rank one non-negative $2 \times 2 \times 2$ tensors. Assume without loss of generality that $d_{3,1} = 0$. Let $q$ be the rank one tensor
with slices $q_{**1}$ and $p_{**1}$ equal, where $q_{**0}$ is set to be the smallest scalar multiple of $p_{**1}$ that zeros out an entry of $p_{**0}$. Then $p - q$ consists of at most three non-zero entries. Let $r$ be the tensor which satisfies $r_{ijk} = p_{ijk} - q_{ijk}$ for two of the three entries at which $p \neq q$. Since these two entries can be chosen to be Hamming neighbours, $r$ is rank one. And since $p-q$ is non-negative, $r$ is non-negative. There remains at most one entry where equality $p = q + r$ does not hold: let $i,j,k$ be such that $p_{ijk} > q_{ijk} + r_{ijk}$. Let $s$ be the all ones tensor, and let $t$ be the tensor with just one non-zero entry, $t_{ijk} = \frac{p_{ijk}}{q_{ijk} + r_{ijk}} - 1$. Then $t$ is also non-negative and rank one, and $p = (q + r)(s + t)$ as required.
\end{proof}

In the log-probability coordinates, the boundary of $\Mcal$ is the union of hyperplanes:
\begin{equation}\label{Ws} \begin{matrix} \Lcal_{1,0} = \{ l_{000} + l_{011} - l_{001} - l_{010} = 0 \}, &  \Lcal_{1,1} = \{ l_{100} + l_{111} - l_{101} - l_{110} = 0 \}, \\ 
\Lcal_{2,0} = \{ l_{000} + l_{101} - l_{001} - l_{100} = 0 \}, &  \Lcal_{2,1} = \{ l_{010} + l_{111} - l_{011} - l_{110} = 0 \}, \\ 
\Lcal_{3,0} = \{ l_{000} + l_{110} - l_{010} - l_{100} = 0 \}, &  \Lcal_{3,1} = \{ l_{001} + l_{111} - l_{011}  - l_{101} = 0 \} .\\ \end{matrix}
\end{equation}

The intersection poset of a hyperplane arrangement is the set of all intersections of hyperplanes, ordered by reverse inclusion \cite{Stanley04anintroduction}.  In Figure~\ref{figure:intersectionposet} we give the intersection poset of the pieces of the boundary of $\mathcal{M}$. 
As an example of its non-generic structure, in Figure~\ref{figure:intersectionposet} we highlight three codimension three flats that are intersections of four hyperplanes.

We can study the combinatorics of the arrangement using its characteristic polynomial $\chi(t) = \sum_f \mu(f) t^{\dim(f)}$. The summation is taken over all flats in the arrangement, and $\mu$ is the M\"obius function (indicated in Figure~\ref{figure:intersectionposet} next to each node). Evaluating the characteristic polynomial at $t=-1$ gives the number of full dimensional regions of the ambient space defined by the arrangement (see~\cite{Stanley04anintroduction}) 
$$
|\chi(-1)| = 46. 
$$
For comparison, a generic four dimensional central arrangement of six hyperplanes defines $52$ regions. 
Ours is a central arrangement (the origin is in all hyperplanes) hence all $46$ regions are unbounded cones. Of the $46$ regions the model $\Mcal$ occupies $44$. The model $\Mcal_{3,2}$ occupies four of the regions. 

\begin{figure}
	\centering
	\scalebox{.85}{	
		\begin{tikzpicture}[vertex/.style={draw,circle, fill=black, inner sep =.075cm,outer sep=.1cm}]
		\foreach [count=\i] \coord in 
		{   (0,0), 
			(-5,2), (-3,2), (-1,2), (1,2), (3,2), (5,2), 
			(-7,4), (-6,4), (-5,4), (-4,4), (-3,4), (0,4), (-2,4), (-1,4), (1,4), (2,4), (3,4), (4,4), (5,4), (6,4), (7,4),
			(-5,6), (0,6), (-4,6), (-3,6), (-2,6), (-1,6), (5,6), (1,6), (2,6), (3,6), (4,6),
			(0,8)
		}{ \node[vertex] (p\i) at \coord {};} 
		
		\node[draw,circle, inner sep=.1cm,fill=black] at (-7,4) {};
		\node[draw,circle, inner sep=.1cm,fill=black] at (0,4) {};
		\node[draw,circle, inner sep=.1cm,fill=black] at (7,4) {};
		\node[draw,circle, inner sep=.1cm,fill=black] at (-5,6) {};
		\node[draw,circle, inner sep=.1cm,fill=black] at (0,6) {};
		\node[draw,circle, inner sep=.1cm,fill=black] at (5,6) {};
		\node[draw,circle, inner sep=.1cm,fill=black] at (0,8) {};
		
		\definecolor{bd1}{rgb}{0, 0, 0.5625};
		\definecolor{bd2}{rgb}{0, 0.1250, 1.0000}
		\definecolor{bd3}{rgb}{0, 0.800, 0.7500};
		\definecolor{bd4}{rgb}{0, 0.8000, 0.50};
		\definecolor{bd5}{rgb}{1.0000, 0.8500, 0};
		\definecolor{bd6}{rgb}{1.0000, 1, 0};
		\node[draw,circle, inner sep=.1cm,fill=bd1] at (-5,2) {};
		\node[draw,circle, inner sep=.1cm,fill=bd2] at (-3,2) {};
		\node[draw,circle, inner sep=.1cm,fill=bd3] at (-1,2) {};
		\node[draw,circle, inner sep=.1cm,fill=bd4] at (1,2) {};
		\node[draw,circle, inner sep=.1cm,fill=bd5] at (3,2) {};
		\node[draw,circle, inner sep=.1cm,fill=bd6] at (5,2) {};

		\foreach [count=\r] \row in 
		{
			{0,0,0,0,0,0,0,0,0,0,0,0,0,0,0,0,0,0,0,0,0,0,0,0,0,0,0,0,0,0,0,0,0}, 	
			{1,0,0,0,0,0,0,0,0,0,0,0,0,0,0,0,0,0,0,0,0,0,0,0,0,0,0,0,0,0,0,0,0},
			{1,0,0,0,0,0,0,0,0,0,0,0,0,0,0,0,0,0,0,0,0,0,0,0,0,0,0,0,0,0,0,0,0},
			{1,0,0,0,0,0,0,0,0,0,0,0,0,0,0,0,0,0,0,0,0,0,0,0,0,0,0,0,0,0,0,0,0},
			{1,0,0,0,0,0,0,0,0,0,0,0,0,0,0,0,0,0,0,0,0,0,0,0,0,0,0,0,0,0,0,0,0},
			{1,0,0,0,0,0,0,0,0,0,0,0,0,0,0,0,0,0,0,0,0,0,0,0,0,0,0,0,0,0,0,0,0},
			{1,0,0,0,0,0,0,0,0,0,0,0,0,0,0,0,0,0,0,0,0,0,0,0,0,0,0,0,0,0,0,0,0},
			{0,1,1,0,0,0,0,0,0,0,0,0,0,0,0,0,0,0,0,0,0,0,0,0,0,0,0,0,0,0,0,0,0},
			{0,1,0,1,0,0,0,0,0,0,0,0,0,0,0,0,0,0,0,0,0,0,0,0,0,0,0,0,0,0,0,0,0},
			{0,0,1,1,0,0,0,0,0,0,0,0,0,0,0,0,0,0,0,0,0,0,0,0,0,0,0,0,0,0,0,0,0},
			{0,1,0,0,1,0,0,0,0,0,0,0,0,0,0,0,0,0,0,0,0,0,0,0,0,0,0,0,0,0,0,0,0},
			{0,0,1,0,1,0,0,0,0,0,0,0,0,0,0,0,0,0,0,0,0,0,0,0,0,0,0,0,0,0,0,0,0},
			{0,0,0,1,1,0,0,0,0,0,0,0,0,0,0,0,0,0,0,0,0,0,0,0,0,0,0,0,0,0,0,0,0},
			{0,1,0,0,0,1,0,0,0,0,0,0,0,0,0,0,0,0,0,0,0,0,0,0,0,0,0,0,0,0,0,0,0},
			{0,0,1,0,0,1,0,0,0,0,0,0,0,0,0,0,0,0,0,0,0,0,0,0,0,0,0,0,0,0,0,0,0},
			{0,0,0,1,0,1,0,0,0,0,0,0,0,0,0,0,0,0,0,0,0,0,0,0,0,0,0,0,0,0,0,0,0},
			{0,0,0,0,1,1,0,0,0,0,0,0,0,0,0,0,0,0,0,0,0,0,0,0,0,0,0,0,0,0,0,0,0},
			{0,1,0,0,0,0,1,0,0,0,0,0,0,0,0,0,0,0,0,0,0,0,0,0,0,0,0,0,0,0,0,0,0},
			{0,0,1,0,0,0,1,0,0,0,0,0,0,0,0,0,0,0,0,0,0,0,0,0,0,0,0,0,0,0,0,0,0},
			{0,0,0,1,0,0,1,0,0,0,0,0,0,0,0,0,0,0,0,0,0,0,0,0,0,0,0,0,0,0,0,0,0},
			{0,0,0,0,1,0,1,0,0,0,0,0,0,0,0,0,0,0,0,0,0,0,0,0,0,0,0,0,0,0,0,0,0},
			{0,0,0,0,0,1,1,0,0,0,0,0,0,0,0,0,0,0,0,0,0,0,0,0,0,0,0,0,0,0,0,0,0},
			{0,0,0,0,0,0,0,1,1,1,1,1,1,0,0,0,0,0,0,0,0,0,0,0,0,0,0,0,0,0,0,0,0},
			{0,0,0,0,0,0,0,1,0,0,0,0,0,1,1,0,0,1,1,0,0,1,0,0,0,0,0,0,0,0,0,0,0},
			{0,0,0,0,0,0,0,0,1,0,0,0,0,1,0,1,0,0,0,0,0,0,0,0,0,0,0,0,0,0,0,0,0},
			{0,0,0,0,0,0,0,0,0,1,0,0,0,0,1,1,0,0,0,0,0,0,0,0,0,0,0,0,0,0,0,0,0},
			{0,0,0,0,0,0,0,0,0,0,1,0,0,1,0,0,1,0,0,0,0,0,0,0,0,0,0,0,0,0,0,0,0},
			{0,0,0,0,0,0,0,0,0,0,0,1,0,0,1,0,1,0,0,0,0,0,0,0,0,0,0,0,0,0,0,0,0},
			{0,0,0,0,0,0,0,0,0,0,0,0,1,0,0,1,1,0,0,1,1,1,0,0,0,0,0,0,0,0,0,0,0},
			{0,0,0,0,0,0,0,0,1,0,0,0,0,0,0,0,0,1,0,1,0,0,0,0,0,0,0,0,0,0,0,0,0},
			{0,0,0,0,0,0,0,0,0,1,0,0,0,0,0,0,0,0,1,1,0,0,0,0,0,0,0,0,0,0,0,0,0},
			{0,0,0,0,0,0,0,0,0,0,1,0,0,0,0,0,0,1,0,0,1,0,0,0,0,0,0,0,0,0,0,0,0},
			{0,0,0,0,0,0,0,0,0,0,0,1,0,0,0,0,0,0,1,0,1,0,0,0,0,0,0,0,0,0,0,0,0},
			{0,0,0,0,0,0,0,0,0,0,0,0,0,0,0,0,0,0,0,0,0,0,1,1,1,1,1,1,1,1,1,1,1},
		}{
		\foreach [count=\c] \cell in \row{
			\ifnum\cell=1%
			\draw (p\r) edge (p\c);
			\fi
		}
	}
	
	\foreach [count=\i] \coord in { 7}{ \node[circle, fill=white, inner sep=0] (q\i) at (-.4,8) {\textcolor{gray}{\small\coord}};} 	
	\foreach [count=\i] \coord in { -3,-1,-1,-1,-1,-3,-1,-1,-1,-1,-3}{ \node[circle, fill=white, inner sep=0] (q\i) at (\i-6.35,6) {\textcolor{gray}{\small\coord}};} 
	\foreach [count=\i] \coord in {1,1,1,1,1,1,1,1,1,1,1,1,1,1,1}{ \node[circle, fill=white, inner sep=0] (q\i) at (\i-8.3,4) {\textcolor{gray}{\small\coord}};} 
	\foreach [count=\i] \coord in {-1,-1,-1,-1,-1,-1}{ \node[circle, fill=white, inner sep=0] (q\i) at (2*\i-7.35,2) {\textcolor{gray}{\small\coord}};} 
	\foreach [count=\i] \coord in { 1}{ \node[circle, fill=white, inner sep=0] (q\i) at (-.325,0) {\textcolor{gray}{\small\coord}};} 	
	\end{tikzpicture}
}
\caption{Intersection poset of the boundary pieces of $\Mcal$. 
	The lowest node is the ambient space $\mathbb{R}^8$. 
	At the first level are the six boundary pieces. 
	At the second level are the $15$ pairwise intersections. The enlarged nodes are $\Lcal_{i,0}\cap\Lcal_{i,1}$. 
	The third level contains the $11$ distinct codimension three intersections. 
	The top intersection corresponds to the independence model. 
	The nodes are labeled with their M\"obius function value. }
\label{figure:intersectionposet}
\end{figure}
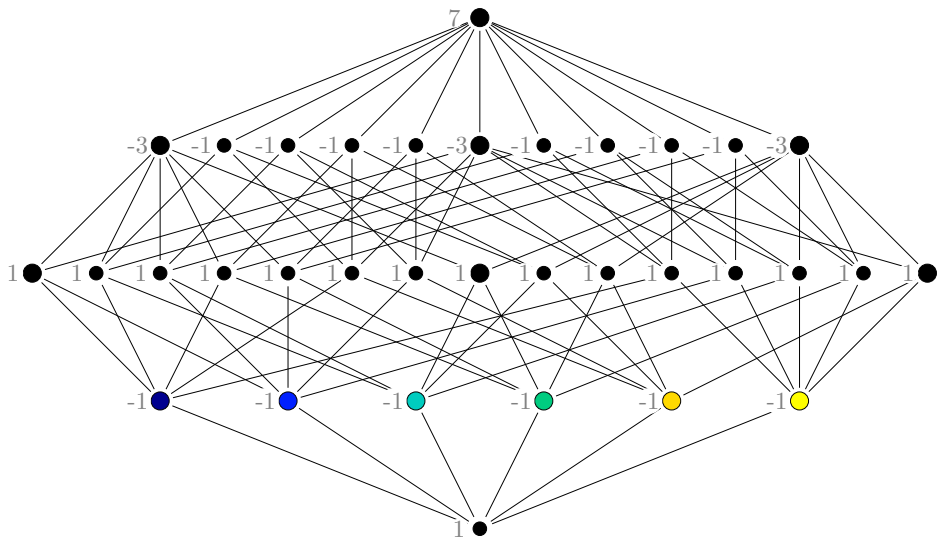 

Since the six boundary pieces~\eqref{Ws} are linear equations in log probability space, they define exponential families. 
For instance, the exponential family $\Lcal_{1,0}$ consists of all distributions whose log-probabilities have a vanishing inner product with $[1, -1, -1, 1, 0, 0, 0, 0]^\top$. A sufficient statistic is any set of vectors spanning the kernel of this vector.
Since intersections of exponential families are exponential families, each element in the intersection poset in Figure~\ref{figure:intersectionposet} 
is also an exponential family. 


\section{Maximum likelihood} \label{6}

In this section we give a closed-form formula for the maximum likelihood estimation to the model $\Mcal$. We also find the distributions whose divergence to the model is greatest.

Consider an empirical probability distribution coming from some data. The maximum likelihood estimation problem asks for the distribution in a statistical model with smallest Kullback-Leibler (KL) divergence to the data distribution. The KL divergence from $p$ to $q$ is defined as
$D(p\|q) := \sum_{x} p_x \log\frac{p_x}{q_x}$,
where $x$ ranges over the possible states of $p$ and $q$.
This is zero if and only if $p=q$ and it is set to $+\infty$ when $\supp(p)\not\subseteq \supp(q)$. 
The distributions in the closure of a model that minimize the KL divergence are called 
{\em reverse information projections} (rI-projections)~\cite{csiszar2004}.
In general they are not unique. 

\subsection{Reversed information projections}

To study the maximum likelihood estimation problem for the model $\mathcal{M}$, we first find the rI-projections to each boundary piece of the model. We use the description of the boundary pieces as exponential families from Section~\ref{5}. 
Proposition~\ref{slice} means we only need to consider projections onto the six boundary pieces, not onto the entire intersection poset (as we would have to for $\mathcal{M}_{3,2}$, see \cite{SerkanPaper}).
For a distribution $p \in \Delta_7 \backslash \mathcal{M}$, each rI-projection will lie on one of the boundary pieces, and there is at most one projection point in each boundary piece. 
Taking the projection that minimizes divergence, over the six boundary pieces, gives the rI-projection to the whole model. 

Let $\mathcal{P}_{i,j}$ be the toric hypersurface in the simplex obtained by exponentiating the hyperplane $\mathcal{L}_{i,j}$ in log-probability space and normalizing. The following proposition concerns maximum likelihood estimation for that toric model.

\begin{proposition}
	\label{proposition:rI-projection}
The unique rI-projection of $p \in \Delta_7$ onto $\mathcal{P}_{1,0}$, denoted $p_{\mathcal{P}_{1,0}}$, is found by taking the best rank one approximation in the slice $p_{0**}$ and leaving the other slice unchanged. In symbols,
\begin{equation*}
p_{\mathcal{P}_{1,0}}(X)=
\begin{cases}
p(X_2|X_1)p(X_3|X_1)p(X_1), &X_1=0\\
p(X), & X_1\not=0
\end{cases},
\end{equation*}
where $X$ is the random variable on state space $\{0,1\}^3$ and $X_i$ is its $i$th coordinate. The divergence from $p$ to $\mathcal{P}_{1,0}$ is 
\begin{equation*}
D(p\| \Pcal_{1,0}) = p(X_1=0) \cdot 
I_p(X_2;X_3|X_1=0),  
\end{equation*}
where $I_p(X_2;X_3|X_1=0) = D(p(X_2 X_3|X_1=0) \| p(X_2|X_1=0)p(X_3|X_1=0) )$ is the conditional mutual information of the two variables $X_2$ and $X_3$, given $X_1 = 0$. The rI-projections to the five other pieces follow analogously.
\end{proposition}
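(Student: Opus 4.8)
The plan is to exhibit the rI-projection explicitly and to verify its optimality by the chain rule for relative entropy, rather than by general exponential-family machinery. First I would observe that $\Pcal_{1,0}$ --- the zero set of the binomial $d_{1,0}$, i.e.\ the distributions whose slice $(q_{0jk})$ has rank at most one --- is exactly the set of $q\in\Delta_7$ under which $X_2$ and $X_3$ are conditionally independent given $X_1=0$. (When $q(X_1=0)>0$ a rank-one probability matrix equals the outer product of its row and column sums; when $q(X_1=0)=0$ there is no constraint on that conditional.) Singling out the coordinate $X_1$, for any $q\in\Delta_7$ with $\supp(p)\subseteq\supp(q)$ the chain rule gives
$$ D(p\|q) \;=\; D\!\left(p(X_1)\,\|\,q(X_1)\right) \;+\; \sum_{x_1\in\{0,1\}} p(X_1=x_1)\; D\!\left(p(X_2X_3\mid X_1=x_1)\,\|\,q(X_2X_3\mid X_1=x_1)\right). $$
Membership $q\in\overline{\Pcal_{1,0}}$ constrains only the $x_1=0$ summand, forcing $q(X_2X_3\mid X_1=0)$ to be a product $q_2\otimes q_3$, while $q(X_1)$ and $q(X_2X_3\mid X_1=1)$ stay free; hence the three contributions can be minimized independently.

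The first two are driven to zero --- which forces $q(X_1)=p(X_1)$ and $q(X_2X_3\mid X_1=1)=p(X_2X_3\mid X_1=1)$. For the third I would invoke the classical identity, obtained by inserting the product of the conditional marginals into numerator and denominator: for every product distribution $q_2\otimes q_3$,
$$ D\!\left(p(X_2X_3\mid X_1=0)\,\|\,q_2\otimes q_3\right) \;=\; I_p(X_2;X_3\mid X_1=0) \;+\; D\!\left(p(X_2\mid X_1=0)\,\|\,q_2\right) \;+\; D\!\left(p(X_3\mid X_1=0)\,\|\,q_3\right), $$
which is minimized --- uniquely --- by the conditional marginals $q_2=p(X_2\mid X_1=0)$, $q_3=p(X_3\mid X_1=0)$, with minimum value the conditional mutual information. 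Rescaled by $p(X_1=0)$, this product is precisely the $I$-divergence best rank-one approximation of the slice $p_{0**}$, which connects to the statement as phrased. Summing the three pieces yields $D(p\|q)\ge p(X_1=0)\cdot I_p(X_2;X_3\mid X_1=0)$ with equality exactly at the distribution displayed in the proposition, giving both the rI-projection and the divergence formula.

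Finally I would dispatch the degenerate supports. If $p(X_1=0)=0$ the $x_1=0$ term vanishes along with its coefficient, the divergence is $0$, and any rank-one completion of the zero slice lies in $\overline{\Pcal_{1,0}}$, consistent with the formula; if $p(X_1=0)>0$ the conditional marginals are well defined and the product $q_2\otimes q_3$ has support containing $\supp(p_{0**})$, so $D(p\|q)$ is finite, and for $p$ in the interior of $\Delta_7$ the projection lies in $\Pcal_{1,0}$ itself. Uniqueness follows from the equality case of the identity above. The five remaining pieces come from $\Pcal_{1,0}$ by relabeling the coordinates of $\{0,1\}^3$ --- which permutes the determinants $d_{i,j}$ --- so running the same argument with the corresponding coordinate singled out gives the analogous formulas. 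The step to get right is the decoupling: the reason the projection is this clean is that the \emph{reverse} divergence splits through the chain rule into a single constrained subproblem, namely minimizing $D(\cdot\|\cdot)$ in its \emph{second} argument against a product distribution, whose solution is the classical product of marginals; everything else is bookkeeping about conditioning and supports.
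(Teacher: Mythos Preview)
Your proof is correct. The paper's own proof is a two-line citation: it invokes \cite[Lemma~3.2]{NIPS2011_4380} for the exponential family described by the rank-one-slice condition, together with the standard fact that the rI-projection onto an independence model is the product of marginals. Your argument unpacks exactly this: the chain-rule decomposition singles out the conditional $q(X_2X_3\mid X_1=0)$ as the only constrained piece, and your identity
\[
D\!\left(p(X_2X_3\mid X_1{=}0)\,\|\,q_2\otimes q_3\right) = I_p(X_2;X_3\mid X_1{=}0) + D(p_2\|q_2) + D(p_3\|q_3)
\]
is precisely the ``product of marginals'' fact the paper quotes. So the route is the same, but you give a self-contained elementary derivation where the paper defers to an external lemma; the advantage of your write-up is that a reader sees immediately why the decoupling works and why the minimum is the conditional mutual information, without chasing the reference.
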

\begin{proof}
This follows applying \cite[Lemma~3.2]{NIPS2011_4380} to the exponential family described in Proposition~\ref{slice} and using the fact that the rI-projection of a distribution to an independence model is given by the product of its marginals. 
\end{proof}
 
 The distributions whose rI-projections to $\mathcal{P}_{1,0}$ coincide are those with the same values $p_{1**}$ and fixed marginals on $p_{0**}$. The rI-projection to the entire model is the boundary projection with smallest divergence value. It has divergence 
$$ D(p\| \Mcal) = \min_{i=1,2,3, \, \, j =0,1} D(p\| \Pcal_{i,j}) .$$
The rI-projection of any $p$ to an exponential family is unique, so there are at most six rI-projections to $\Mcal$. 

\begin{remark}
For the $\Mcal_{3,3}$ and $\RBM_{3,2}$ parametrizations of $\Mcal$, each rI-projection may be realized by several distinct choices of the parameters. 
This implies that there are several choices of parameters associated with each local maximizer of the likelihood function. 
\end{remark}

\subsection{Divergence maximizers}

The maximum divergence to a statistical model is a measure of the representational power of that model. 
The uniform distribution on the sets of vectors with even or odd parity need the maximum number of components to be arbitrarily well approximated by a mixture of products distribution (see~\cite{montufar2013mixture}).
Here, we show that these parity distributions have the largest divergence to the model $\Mcal$. 

\begin{proposition}
	\label{proposition:divergencemaximizers}
The maximum divergence to $\Mcal$ is $\frac{1}{2}\log 2$. 
The maximizers are 
$u^+ : = \frac{1}{4}(\delta_{000} + \delta_{011} + \delta_{101} + \delta_{110})$ and 
$u^- : = \frac{1}{4}(\delta_{001} + \delta_{010} + \delta_{100} + \delta_{111})$. 
There are six rI-projections of $u^+$, one in each boundary piece:\begin{gather*}
u^+_{\Pcal_{1,0}} = \frac{1}{8}(\delta_{000}+\delta_{001}+\delta_{010}+\delta_{011}) + \frac{1}{4}(\delta_{101}+\delta_{110}) \phantom{.}\\
u^+_{\Pcal_{1,1}} = \frac{1}{8}(\delta_{100}+\delta_{101}+\delta_{110}+\delta_{111}) + \frac{1}{4}(\delta_{011}+\delta_{000}) \phantom{.}\\
u^+_{\Pcal_{2,0}} = \frac{1}{8}(\delta_{000}+\delta_{001}+\delta_{100}+\delta_{101}) + \frac{1}{4}(\delta_{011}+\delta_{110}) \phantom{.}\\
u^+_{\Pcal_{2,1}} = \frac{1}{8}(\delta_{010}+\delta_{011}+\delta_{110}+\delta_{111}) + \frac{1}{4}(\delta_{000}+\delta_{101}) \phantom{.}\\
u^+_{\Pcal_{3,0}} = \frac{1}{8}(\delta_{000}+\delta_{010}+\delta_{100}+\delta_{110}) + \frac{1}{4}(\delta_{011}+\delta_{101}) \phantom{.}\\
u^+_{\Pcal_{3,1}} = \frac{1}{8}(\delta_{001}+\delta_{011}+\delta_{101}+\delta_{111}) + \frac{1}{4}(\delta_{000}+\delta_{110}) . 
\end{gather*} 
The projection points of $u^-$ are given in a similar way. 
\end{proposition}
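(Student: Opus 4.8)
The plan is to reduce the maximization $\max_{p\in\Delta_7} D(p\|\Mcal)$ to elementary information inequalities, using the two facts already in hand: $D(p\|\Mcal)=\min_{i,h}D(p\|\Pcal_{i,h})$, and the closed form $D(p\|\Pcal_{i,h})=p(X_i=h)\,I_p(X_j;X_k\mid X_i=h)$ from Proposition~\ref{proposition:rI-projection}, where throughout $\{i,j,k\}=\{1,2,3\}$ and $h\in\{0,1\}$. For the upper bound I would bound the minimum over the two pieces of a single axis by their average: for any axis $i$,
\[
D(p\|\Mcal)\;\le\;\tfrac{1}{2}\big(D(p\|\Pcal_{i,0})+D(p\|\Pcal_{i,1})\big)\;=\;\tfrac{1}{2}\,I_p(X_j;X_k\mid X_i)\;\le\;\tfrac{1}{2}\log 2 ,
\]
since the conditional mutual information of two binary variables is at most $H(X_j)\le\log 2$. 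Hence $\max_p D(p\|\Mcal)\le\tfrac12\log 2$.

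Next I would check that this bound is attained at $u^+$ and $u^-$. Each of the six $2\times2$ faces of the cube carries exactly two nonzero entries of $u^+$, sitting at opposite corners of that face and each equal to $\tfrac14$; so restricting $u^+$ to any slice $X_i=h$ leaves total mass $\tfrac12$ and a conditional law on the remaining two coordinates that is uniform on a diagonal pair, hence of mutual information $\log 2$. By Proposition~\ref{proposition:rI-projection}, $D(u^+\|\Pcal_{i,h})=\tfrac12\log2$ for every one of the six pieces, so $D(u^+\|\Mcal)=\tfrac12\log2$, and likewise for $u^-$. This simultaneously pins down the rI-projections: the rI-projection of $u^+$ onto $\Pcal_{i,h}$ is obtained from $u^+$ by the recipe of Proposition~\ref{proposition:rI-projection} (best rank-one approximation in the slice $X_i=h$, opposite slice unchanged), and since all six divergences coincide, all six of these points are rI-projections to $\Mcal$. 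Evaluating the recipe in each of the six cases reproduces the distributions listed in the statement; the list for $u^-$ follows by swapping $0\leftrightarrow1$ in one coordinate.

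For uniqueness of the maximizers, suppose $D(p\|\Mcal)=\tfrac12\log2$. Then $D(p\|\Pcal_{i,h})\ge\tfrac12\log2$ for all six pieces, while $D(p\|\Pcal_{i,0})+D(p\|\Pcal_{i,1})=I_p(X_j;X_k\mid X_i)\le\log2$; hence each $D(p\|\Pcal_{i,h})=\tfrac12\log2$ exactly. Since $D(p\|\Pcal_{i,h})=p(X_i=h)\,I_p(X_j;X_k\mid X_i=h)$ with the second factor at most $\log2$, we get $p(X_i=h)\ge\tfrac12$, so in fact $p(X_i=h)=\tfrac12$ and $I_p(X_j;X_k\mid X_i=h)=\log2$ for every $i,h$. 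The equality $I_p(X_2;X_3\mid X_1=h)=\log2$ forces the conditional law of $(X_2,X_3)$ given $X_1=h$ to be uniform on $\{(x,g_h(x)):x\in\{0,1\}\}$ for some bijection $g_h$ of $\{0,1\}$; the equality $I_p(X_1;X_3\mid X_2=h)=\log2$ forces $X_1$ to be a deterministic function of $(X_2,X_3)$, which is possible only when $g_0$ and $g_1$ disagree at every point. The two resulting supports are exactly the even- and odd-parity subsets of $\{0,1\}^3$, so $p\in\{u^+,u^-\}$.

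I expect the uniqueness step to be the only delicate part: one has to extract from the six equalities the right rigidity (uniform one- and two-dimensional marginals, together with the fact that each coordinate is a deterministic function of the other two) and, crucially, combine constraints coming from \emph{different} axes, because the restrictions to the faces perpendicular to a single axis do not by themselves determine the parity type. The upper bound and the evaluation at $u^{\pm}$ are routine once Proposition~\ref{proposition:rI-projection} is available.
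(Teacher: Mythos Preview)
Your proof is correct and follows the same overall architecture as the paper's: use Proposition~\ref{proposition:rI-projection} to compute the six divergences, verify attainment at $u^\pm$, and squeeze out uniqueness from the equality cases. There are two small but genuine differences worth noting. First, for the upper bound the paper simply cites an external result, whereas your averaging trick
\[
D(p\|\Mcal)\le \tfrac12\bigl(D(p\|\Pcal_{i,0})+D(p\|\Pcal_{i,1})\bigr)=\tfrac12\,I_p(X_j;X_k\mid X_i)\le\tfrac12\log 2
\]
gives a fully self-contained argument; this is a nicer route. Second, for uniqueness the paper works with a \emph{single} axis: it uses $D(p\|\Pcal_{1,0})\le D(p\|\Pcal_{1,1})\le(1-p(X_1{=}0))\log 2$ together with $D(p\|\Pcal_{1,0})=p(X_1{=}0)\,I_p(X_2;X_3\mid X_1{=}0)$ to force $p(X_1{=}0)=\tfrac12$ and both conditional slices uniform on a diagonal pair, then rules out the case where the two diagonals coincide by observing that such a $p$ lies in $\Mcal$ (indeed in $\Mcal_{3,2}$). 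You instead combine constraints from \emph{two} axes to separate $g_0$ from $g_1$. Both approaches work; the paper's is marginally shorter, yours avoids the side appeal to membership in $\Mcal$.
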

\begin{proof}
Proposition~\ref{proposition:rI-projection} shows that the indicated distributions are the rI-projections of $u^+$ onto the individual boundary pieces of $\Mcal$. There can be no more than six projection points and hence we have a complete list. That $\frac{1}{2}\log 2$ is the maximum possible divergence to $\Mcal$ follows from upper bounds for mixtures of products and RBMs given in~\cite{Montufar2013}. 
Both $u^+$ and $u^-$ attain this upper bound. 

Now we show that $u^+$ and $u^-$ are the only divergence maximizers.
Assume without loss of generality that some maximizer $p$ has an rI-projection onto $\Mcal $ in $\Pcal_{1,0}$. 
Then 
$D(p\|\Pcal_{1,0}) = p(X_1=0) I_p(X_2; X_3|X_1=0) \leq D(p\|\Pcal_{1,1}) = p(X_1=1) I_p(X_2; X_3|X_1=1) \leq (1 - p(X_1=0))\log 2$. 
The last inequality follows since,
for two binary variables, the mutual information is maximized by a uniform distribution on strings of Hamming distance 2 (see~\cite{Ay2006}). 
The maximum value $\frac{1}{2}\log 2$ is attained only if $p(X_1=0)=p(X_1=1)=\frac{1}{2}$ and both $p(X_2 X_3|X_1=0)$ and $p(X_2 X_3|X_1=1)$ are uniform on pairs of Hamming distance 2. 
If these two conditional distributions were equal, then $p \in \Mcal$, and $p$ is not a divergence maximizer. Hence the pairs are different, and $p$ is a uniform distribution on $4$ strings of equal parity.\end{proof}

\begin{remark}
Proposition~\ref{proposition:divergencemaximizers} shows that the upper bound on the maximum divergence to mixtures
of products and RBMs from~\cite[Theorems~1 and~2]{Montufar2013} is tight in the case of $\Mcal_{3,3}$ and $\RBM_{3,2}$. 
Moreover it shows that for a given data point $\RBM_{3,2}$ can have up to $6$ global maximizers of the likelihood, and that generically this will be the number of local maximizers. 
\end{remark}

An interesting question is whether we can characterize the points in the probability simplex that project to the different boundary pieces of the model. 
That is, to provide a \emph{decision boundary} separating the regions of the simplex that are closer to each part of the model, with respect to the KL divergence. 
In our case, these decision boundaries are neither linear families nor exponential families. 

\section{Visualization in three dimensions} \label{7}

In \cite[Figure 3]{S}, a first attempt was made to visualize the model $\Mcal$. In this section, we explain how to draw the seven-dimensional model $\Mcal$ using a three dimensional figure. We make use of the following change of basis (corresponding to the basis of characters) in the log-probability coordinates: 

\begin{equation*}
\begin{pmatrix}
m_\emptyset\\
m_{\{3\}}\\
m_{\{2\}}\\
m_{\{2,3\}}\\
m_{\{1\}}\\
m_{\{1,3\}}\\
m_{\{1,2\}}\\
m_{\{1,2,3\}}
\end{pmatrix}
=
\begin{pmatrix*}[r]
1&    1&    1&    1&    1&    1&    1&    1\\
1&   -1&    1&   -1&    1&   -1&    1&   -1\\
1&    1&   -1&   -1&    1&    1&   -1&   -1\\
1&   -1&   -1&    1&    1&   -1&   -1&    1\\
1&    1&    1&    1&   -1&   -1&   -1&   -1\\
1&   -1&    1&   -1&   -1&    1&   -1&    1\\
1&    1&   -1&   -1&   -1&   -1&    1&    1\\
1&   -1&   -1&    1&   -1&    1&    1&   -1\\
\end{pmatrix*}
\begin{pmatrix}
l_{000} \\ l_{001} \\ l_{010} \\ l_{011} \\ l_{100} \\ l_{101} \\ l_{110} \\ l_{111} 
\end{pmatrix}.
\end{equation*}
The boundary pieces of the model can be written in terms of just four of these coordinates:
\begin{equation*}
\def\arraystretch{1.25}
\begin{array}{ll}
\Lcal_{1,0} =  \{m_{\{2,3\}} + m_{\{1,2,3\}} = 0 \}, &
\Lcal_{1,1} = \{ m_{\{2,3\}} - m_{\{1,2,3\}} = 0\}, \\
\Lcal_{2,0} = \{m_{\{1,3\}} + m_{\{1,2,3\}} = 0 \}, &
\Lcal_{2,1} = \{ m_{\{1,3\}} - m_{\{1,2,3\}} = 0\}, \\
\Lcal_{3,0} = \{m_{\{1,2\}} + m_{\{1,2,3\}} = 0 \}, &
\Lcal_{3,1} = \{ m_{\{1,2\}} - m_{\{1,2,3\}} = 0\} .
\end{array}
\label{eq:characterconstraints}
\end{equation*}
Hence it suffices to visualize the combinations of coordinates
$( m_{\{1, 2\}} ,m_{\{1, 3\}} , m_{\{2, 3\}} , m_{\{1, 2, 3\}}  )$
 that lie in the model. Furthermore, if a vector satisfies the inequalities above, then so does any scalar multiple of it.
This means we need consider only those $(m_{\{1, 2\}} ,m_{\{1, 3\}} , m_{\{2, 3\}}, m_{\{1,2,3\}})$ lying on the three-dimensional sphere. 
The value of $m_{\{1, 2, 3\}}$ can be found up to sign from the other three coordinates. 
We draw the model in coordinates
 \begin{equation}
(\overline{m}_{\{1,2\}} , \overline{m}_{\{1,3\}}, \overline{m}_{\{2,3\}}) = \frac{(m_{\{1,2\}} , m_{\{1,3\}}, m_{\{2,3\}})}{\|(m_{\{1,2\}}, m_{\{1,3\}}, m_{\{2,3\}}, m_{\{1,2,3\}})\|_2} ,
\label{eq:projectiviz}
\end{equation}
with separate panels for the different signs of $m_{\{1,2,3\}}$. Figure~\ref{2blobs} shows pieces $\Lcal_{1,0}$ and $\Lcal_{1,1}$.  The whole model is shown in Figure~\ref{figure:blobs}.
 
 \begin{figure}
 	\centering
 	\scalebox{.95}{	
 	\setlength{\unitlength}{1cm}
 	\setlength{\fboxsep}{0pt}%
 	\begin{picture}(17,8)
 	\put(0,0){\includegraphics[width=15.6cm]{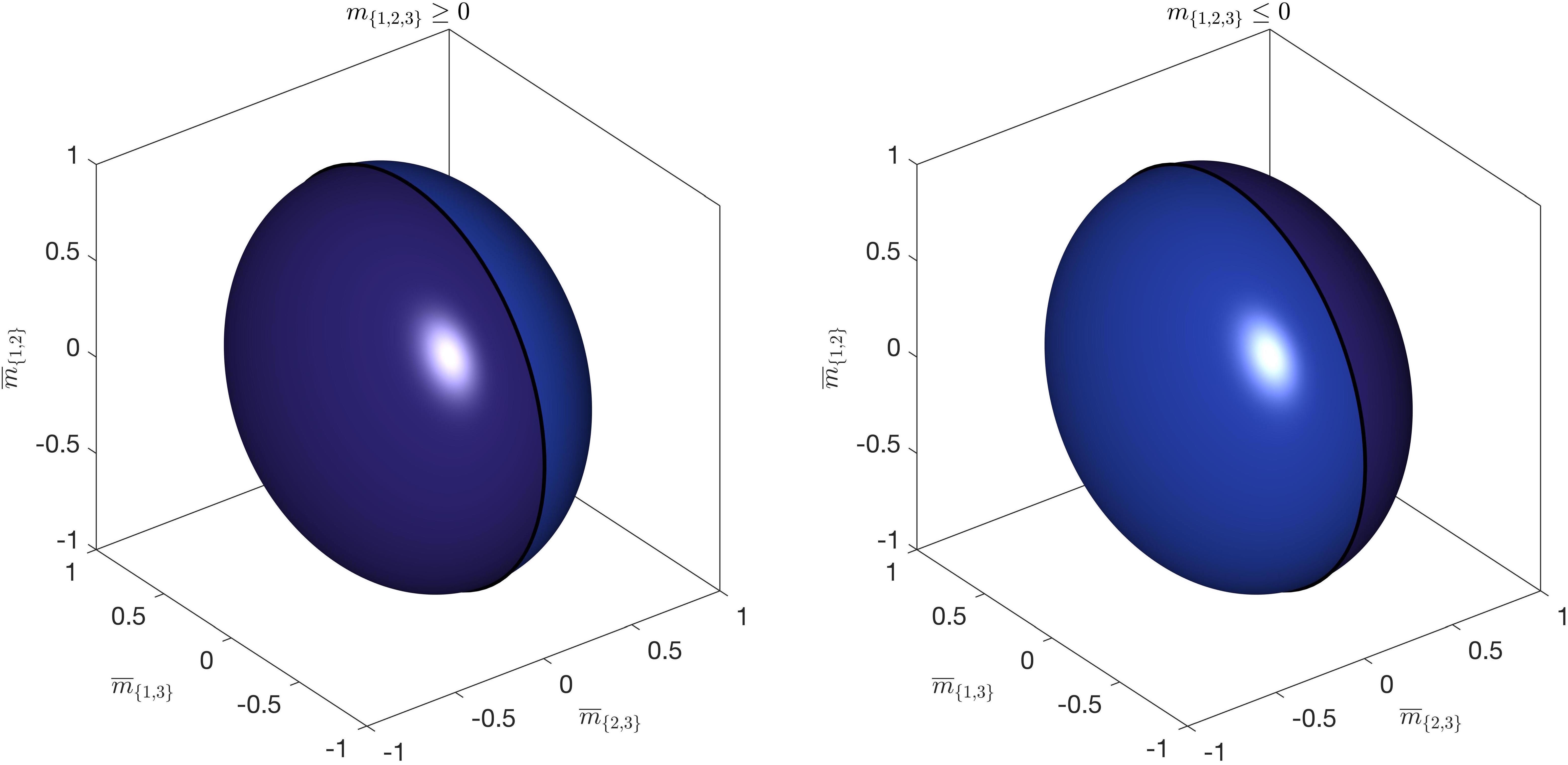}}
 	\put(15.4,2.5){\includegraphics[width=1.1cm]{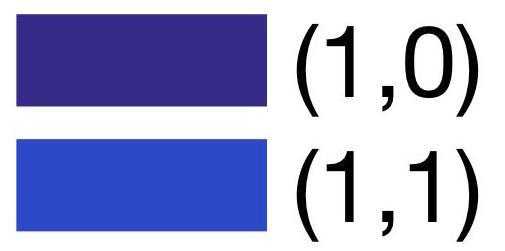}}
 	\end{picture}
 	}	
	\caption{Illustration of two boundary pieces of the model $\mathcal{M}$. The set $\Lcal_{1,0}$ is in dark blue, and $\Lcal_{1,1}$ is in light blue. The points enclosed by the surface correspond to distributions in the complement of the two basic semi-algebraic sets of $\RBM_{3,2}$ enclosed by $\Lcal_{1,0}$ and $\Lcal_{1,1}$. 
The black line is $\{ m_{\{2,3\}} = m_{\{1,2,3\}} = 0 \}$, along which $\Lcal_{1,0}$ and $\Lcal_{1,1}$ meet. The non-linearity of the surfaces is due to normalizing with respect to the $\|\cdot\|_2$ norm.}
	\label{2blobs}
  \end{figure}
 
 \begin{figure}
	\centering
\scalebox{.95}{	
	\setlength{\unitlength}{1cm}
	\setlength{\fboxsep}{0pt}%
	\begin{picture}(17,8)
	\put(0,0){\includegraphics[width=15.6cm]{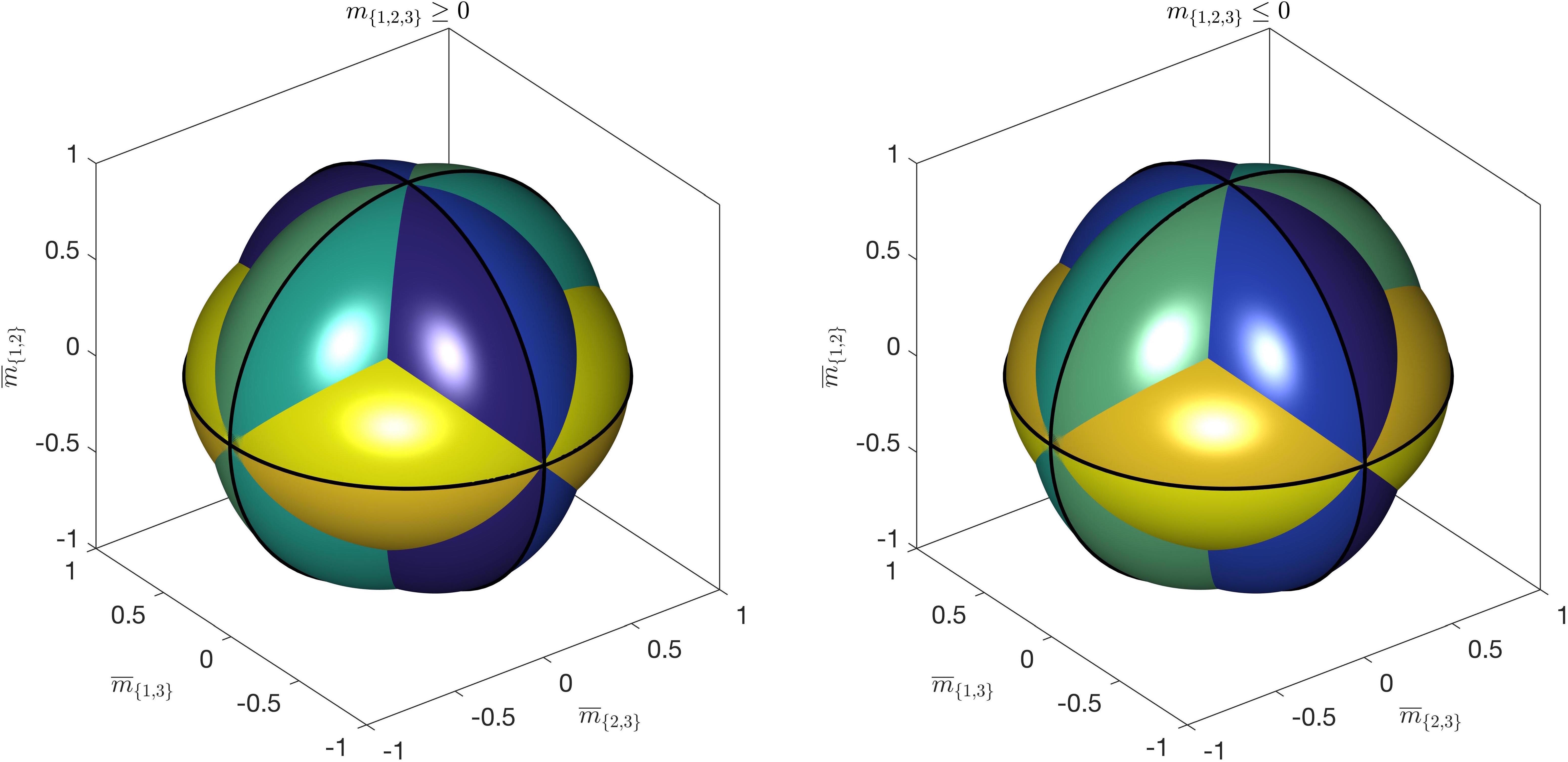}}
	\put(15.4,2.5){\includegraphics[width=1.1cm]{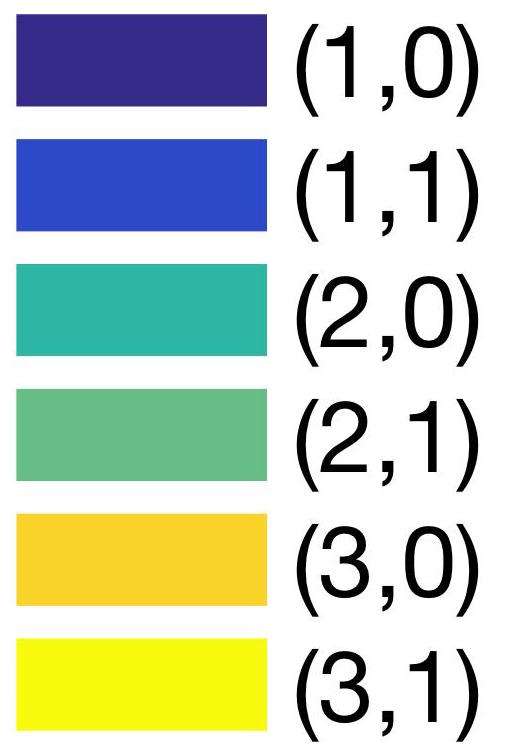}}
	\end{picture}
}	
	\caption{Illustration of $\Mcal$ in the~\eqref{eq:projectiviz} coordinates. 
	The model occupies the space inside the three-sphere that is outside any of the blue, green, or yellow surfaces. The colours correspond to the six boundary pieces of the model. 
	Within each orthant, the part of the sphere outside all three surfaces is a triangular bipyramid. Four of these make up the model $\Mcal_{3,2}$. }
	\label{figure:blobs}
\end{figure}

\section{Outlook}
\label{section:conclusions}

We proved the rather surprising fact that a mixture of products and a product of mixtures represent the same set of probability distributions. 
Although for larger models this is known not to be true in general~\cite{montufar2015does}, it points at a close similarity of both models. 

In most previous work on the representational power of RBMs, membership in the model is determined by constructing parameters that realize certain probability distributions. 
In contrast, the implicit descriptions discussed here fully characterize distributions that are in the model.
As we have shown, the semi-algebraic description also allows the computation of maximum likelihood estimates and divergence maximizers, both of which appear quite difficult to obtain via other methods.

The natural next step is to extend the analysis to larger models. 
However, the description for larger models involves complicated equality constraints. For example, in~\cite{Cueto:2010:ICB:1866469.1866627} the Zariski closure of the model $\RBM_{4,2}$ is found. It is the zero set of a single degree 110 polynomial with at least 17,214,912 terms.  The binomial inequalities we obtain here are more tractable.

In light of this, it appears natural to consider approximate descriptions of larger RBM models in terms of inequality constraints only. A relaxation of larger statistical models, given in terms of inequalities only, would provide lower bounds on the maximal divergence and the minimal size of universal approximators.  

In~\cite{ALLMAN201537} the authors show that the model $\Mcal_{n,2}$ consists of supermodular distributions with flattening rank at most two. Distributions in larger RBM models are Hadamard products of non-negative tensors of rank at most two (products of tensors proportional to distributions in $\Mcal_{n,2}$). 
Ignoring the equations, we have the set of supermodular tensors, which consists of basic semi-algebraic sets satisfying binomial quadratic inequalities as in~\eqref{ineq}. 
Hence the algebraic boundary of Hadamard products of supermodular tensors is again a union of exponential families,
for 
which we may hope to obtain maximum likelihood estimates in closed form. 

\bigskip

{\bf Acknowledgments.} We are grateful to Bernd Sturmfels for fruitful discussions. 
GM acknowledges support from the Erwin Schr\"odinger Institute.

\let\OLDthebibliography\thebibliography
\renewcommand\thebibliography[1]{
  \OLDthebibliography{#1}
  \setlength{\parskip}{0pt}
  \setlength{\itemsep}{2pt}
}
\begin{small}
\bibliographystyle{abbrv}
\bibliography{referenzen}
\end{small}

\vspace{-.1cm}
\noindent \footnotesize {\bf Authors' addresses:}

\noindent Anna Seigal,
University of California, Berkeley, USA,
{\tt seigal@berkeley.edu}.

\noindent Guido Mont\'ufar, 
Max Planck Institute for Mathematics in the Sciences, Leipzig, Germany; and Departments of Mathematics and Statistics, University of California, Los Angeles, USA, 
{\tt montufar@mis.mpg.de}. 

\end{document}